\tikzset{
    -Latex,auto,node distance =1 cm and 1 cm,semithick,
    state/.style ={ellipse, draw, minimum width = 0.7 cm},
    point/.style = {circle, draw, inner sep=0.04cm,fill,node contents={}},
    bidirected/.style={Latex-Latex,dashed},
    el/.style = {inner sep=2pt, align=left, sloped}
}
\newcommand{\E}{\mathbb{E}}
\newcommand{\X}{\mathcal{X}}
\newcommand{\Y}{\mathcal{Y}}
\newcommand{\calF}{\mathcal{F}}
\newcommand{\argmin}{\text{argmin}}
\newcommand{\norm}[1]{\lVert #1 \rVert }
\newcommand{\vect}[1]{\mathbf{#1}}
\newcommand{\DR}{\textsc{DR}\xspace}
\newcommand{\DRSF}{\textsc{DR-SF}\xspace}
\newcommand{\MR}{\textsc{MR}\xspace}
\newcommand{\MRLC}{\textsc{MR - LC}\xspace}
\newcommand{\XGB}{\textsc{XGB}\xspace}
\newcommand{\MFSAN}{\textsc{MFSAN}\xspace}
\newcommand{\DADIL}{\textsc{DADIL}\xspace}
\theoremstyle{plain}
\newtheorem{theorem}{Theorem}[section]
\newtheorem{lemma}[theorem]{Lemma}
\theoremstyle{definition}
\newtheorem{assumption}[theorem]{Assumption}
\theoremstyle{remark}
\newtheorem{remark}[theorem]{Remark}
\icmltitlerunning{Multiply Robust Estimation}
\begin{document}

\twocolumn[
\icmltitle{Multiply Robust Estimation for Local Distribution Shifts with Multiple Domains}

% It is OKAY to include author information, even for blind
% submissions: the style file will automatically remove it for you
% unless you've provided the [accepted] option to the icml2024
% package.

% List of affiliations: The first argument should be a (short)
% identifier you will use later to specify author affiliations
% Academic affiliations should list Department, University, City, Region, Country
% Industry affiliations should list Company, City, Region, Country

% You can specify symbols, otherwise they are numbered in order.
% Ideally, you should not use this facility. Affiliations will be numbered
% in order of appearance and this is the preferred way.
\icmlsetsymbol{equal}{*}

\begin{icmlauthorlist}
\icmlauthor{Steven Wilkins-Reeves}{uw}
\icmlauthor{Xu Chen}{comp}
\icmlauthor{Qi Ma}{comp}
\icmlauthor{Christine Agarwal}{comp}
\icmlauthor{Aude Hofleitner}{comp}
% \icmlauthor{Firstname5 Lastname5}{yyy}
% \icmlauthor{Firstname6 Lastname6}{sch,yyy,comp}
% \icmlauthor{Firstname7 Lastname7}{comp}
%\icmlauthor{}{sch}
% \icmlauthor{Firstname8 Lastname8}{sch}
% \icmlauthor{Firstname8 Lastname8}{yyy,comp}
%\icmlauthor{}{sch}
%\icmlauthor{}{sch}
\end{icmlauthorlist}

\icmlaffiliation{uw}{Department of Statistics, University of Washington, Seattle, USA}
\icmlaffiliation{comp}{Central Applied Science, Meta, Menlo Park, USA}

% \icmlaffiliation{sch}{School of ZZZ, Institute of WWW, Location, Country}

\icmlcorrespondingauthor{Steven Wilkins-Reeves}{stevewr@uw.edu}
\icmlcorrespondingauthor{Xu Chen}{xuchen2@meta.com}

% You may provide any keywords that you
% find helpful for describing your paper; these are used to populate
% the "keywords" metadata in the PDF but will not be shown in the document
\icmlkeywords{Domain Adaptation, Multiply Robust, Covariate Shift, Label Shift}

\vskip 0.3in
]

% this must go after the closing bracket ] following \twocolumn[ ...

% This command actually creates the footnote in the first column
% listing the affiliations and the copyright notice.
% The command takes one argument, which is text to display at the start of the footnote.
% The \icmlEqualContribution command is standard text for equal contribution.
% Remove it (just {}) if you do not need this facility.

\printAffiliationsAndNotice{}  % leave blank if no need to mention equal contribution
%\printAffiliationsAndNotice{\icmlEqualContribution} % otherwise use the standard text.

\begin{abstract}
Distribution shifts are ubiquitous in real-world machine learning applications, posing a challenge to the generalization of models trained on one data distribution to another. We focus on scenarios where data distributions vary across multiple segments of the entire population and only make local assumptions about the differences between training and test (deployment) distributions within each segment. We propose a two-stage multiply robust estimation method to improve model performance on each individual segment for tabular data analysis. The method involves fitting a linear combination of the based models, learned using clusters of training data from multiple segments, followed by a refinement step for each segment. Our method is designed to be implemented with commonly used off-the-shelf machine learning models. We establish theoretical guarantees on the generalization bound of the method on the test risk. With extensive experiments on synthetic and real datasets, we demonstrate that the proposed method substantially improves over existing alternatives in prediction accuracy and robustness on both regression and classification tasks. We also assess its effectiveness on a user city prediction dataset from Meta. %The code is proprietary and will be made available pending internal review at \url{}. 

    % Distribution shifts in machine learning are widespread in real-world scenarios, causing disparities in performance between training and test (production) environments. We focus on scenarios where distribution shifts occur locally within specific segments of the training and test populations. To address this, we propose a two-stage multiply robust method. This approach involves fitting a linear combination of pre-trained models, followed by a fine-tuning step estimator for each segment. Our method is designed to be implemented with standard off-the-shelf methods. Additionally, we derive a generalization bound to assess the performance of our method on each segment. We conduct a comprehensive evaluation, comparing our method with state-of-the-art multi-source domain adaptation methods on various simulated and real datasets. We also assess its effectiveness on a home-city prediction dataset within a large technology company.
\end{abstract}

\section{Introduction}
Machine learning models are often trained, evaluated and monitored across multiple segments in both scientific research and industrial applications \citep{perone2019unsupervised, wang2019bermuda, mcmahan2013ad, chen2023binary}. Due to the heterogeneous nature of data distributions, the optimal model and its performance may differ among these segments. For example, in the context of the technology industry, it is crucial to evaluate model performance across different countries, end devices and age groups, as user behaviors can significantly vary across these segments \citep{mcmahan2013ad}. In addition to the heterogeneity of data distributions across segments, it is also a common issue where the distribution of the data available for model training differs from that of the target population for model deployment \citep{storkey2009training}, primarily due to sampling bias \citep{bethlehem2010selection, gonzalez2014assessing}.

While building domain\footnote{We use `domain' and `segment' interchangeably hereafter.}-specific models for all segments and then adjusting for sampling bias may be a straightforward solution, it presents at least three challenges: (1) unreliable model training on segments with an insufficient number of labels, (2) excessive computation when training individual models with a large number of segments, and, (3) a potential loss of information shared across different segments. To address these challenges, we aim to propose a flexible modeling framework for general supervised learning scenarios on \emph{tabular data}, with the goal of improving model performance on each individual domain; however, we also discuss how our methods could be adopted to image and text data.
\begin{figure}[htb!]
    \begin{center} \centerline{\includegraphics[width=0.70\columnwidth]{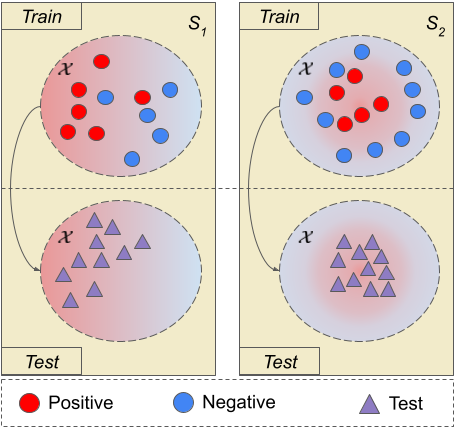}}
    \caption{Schematic representation of the problem setup with two segments $\{S_1, S_2\}$. We observe both features and responses for the training data and only features for the test domain. We don't make any global assumptions of distribution shifts but only shifts within a segment.}
    \label{fig:problem_setup}
    \end{center}
\vskip -0.3in
\end{figure}

Towards that end, we operate under the setting that the observed data in each domain adheres to the standard supervised learning setting. Specifically, we observe a training set and a test (or deployment) set in each domain. The training set consists of samples with a set of features and corresponding responses $(X,Y)$, while the test set only contains samples with features $X'$, and our objective is to predict their corresponding responses $Y'$. 

To make accurate predictions of $Y'$ under distribution shift, it is necessary to assume that the distribution of the training data $P$ and that of the test data $Q$ share a \emph{persistent property}, a component that is identical for $P$ and $Q$, to allow the model learned from the training data to be generalized to the test data. In classic domain adaptation literature, the two fundamental persistent property assumptions are \emph{covariate shift} \citep{shimodaira2000improving} and \emph{label shift} \citep{saerens2002adjusting}. In this work, we adopt a \emph{local} assumption that such persistent property only holds within each segment, without positing any restrictive \emph{global} assumptions on how data distributions are different across segments. We refer to this assumption as \emph{local distribution shift}. The problem setup we have described above is visualized in \cref{fig:problem_setup}.

Under the local distribution shift assumption, we propose a novel two-stage estimation method to enable both information sharing across segments and adaptive adjustments for each individual segment. In the first stage, we learn base models by utilizing the training data from multiple domains and then determine the optimal linear combination of these base models for each segment. In the second stage, we further refine the model from the first stage with a regularization to achieve an effective bias-variance trade-off. Inspired by the doubly robust estimation for covariate shift \citep{reddi2015doubly} and multiple robustness methods used in causal inference and missing data problems \citep{han2013estimation, chan2013simple}, we refer to the proposed method as `multiply robust’ estimation because it leverages all base models to create a domain-specific final model with enhanced predictive power and robustness.

Our main contributions are as follows.
\begin{itemize}
    \item We introduce a relaxed framework for domain adaptation, by assuming the persistent properties only hold locally within each segment of the training and test data. We do not make any assumptions about how the data distributions may vary across different segments.
    \item We propose the multiply robust estimation method for tabular data analysis that is widely applicable
    \begin{itemize}
        \item for both covariate shift and label shift scenarios.
        \item in general supervised learning scenarios, including regression, binary and multi-class classifications.
        \item with commonly used models such as linear models, tree-based models and more.
    \end{itemize}
    \item From a theoretical viewpoint, we establish guarantees on the generalization bound for the proposed method on the test risk. From an empirical perspective, we demonstrate that the proposed method substantially outperforms over existing alternatives in terms of both prediction accuracy and robustness, through comprehensive numerical experiments on both synthetic and real datasets. 
    %\item We include practical considerations for segment-aware domain adaptation in practice, such as segment pre-clustering, regularization, and the strategy of cross-validation.
\end{itemize}

The rest of the paper is organized as follows. We review the related work and highlight the novelty of the proposed method in \cref{sec:related_work}. In \cref{sec:problem_setup}, we outline the problem formulation and assumptions along with notations to facilitate the introduction of the proposed method. We propose the multiply robust estimation method in \cref{sec:method} and provide pragmatic strategies for implementation. In \cref{sec:theory}, we justify the method with theoretical guarantees on generalization bound. We evaluate the empirical performance of the method with extensive experiments on both simulated and real datasets and compare it against several competing methods in \cref{sec:experiments}. We conclude and discuss several future research directions in \cref{sec:conclusion}.

\section{Related Work}
\label{sec:related_work}
We divide the related work broadly into two categories: single-source domain adaptation and multi-source domain adaptation.

\textbf{Single-source domain adaptation.} This category of research focuses on scenarios where all training data originates from a single-source domain. The two most common assumptions are covariate shift \citep{shimodaira2000improving} and label shift \citep{saerens2002adjusting}. A substantial body of literature proposes weighting based methods to adjust the training data to align it with the target population. Some prominent examples include Kullback-Leibler importance estimation \citep[KLIEP]{sugiyama2008direct}, kernel mean matching \citep[KMM]{gretton2009covariate}, penalized risk minimization \citep[PRM]{nguyen2010estimating} and discriminative learning based methods \cite{bickel2009discriminative} for covariate shift, and maximum likelihood label shift \citep[MLLS]{saerens2002adjusting} and black box shift estimation \citep[BBSE]{lipton2018detecting} for label shift. These weighting methods can be adopted to adjust local distribution shifts in each segment, as we shall elaborate in \cref{sec:importance_weights}. 

Despite the wide usage of weighting correction, it may introduce high variance due to the potential extreme weights \citep{reddi2015doubly}. For variance reduction, a doubly robust (\DR) method is proposed by incorporating a shrinkage towards the model trained without weights. Applying \DR in our setting requires combining training data from all segments. Therefore \DR is sub-optimal because it is unable to effectively adjust for any potential distribution shifts across segments, even when domain-specific features are incorporated into the modeling (\cref{sec:experiments}).

\textbf{Multi-source domain adaptation.} The work in this area assumes the presence of multiple source domains and aims to make predictions on a new target domain \citep{mansour2008domain}. A line of work in the avenue seeks to learn domain-invariant representations for all domains using deep learning \citep{zhao2018adversarial, xu2018deep}. However, such representation can be challenging to learn without strong assumptions on the shared properties across segments \citep{zhu2019aligning}. To address this difficulty, \citet{zhu2019aligning} proposes \MFSAN to extract both global and domain-specific features. Another research direction leverages optimal transport to learn a dictionary of data distributions and then approximate the target population using a mixture of the elements in the dictionary \citep[\DADIL]{montesuma2021wasserstein, montesuma2023multi}. We select two representatives, \MFSAN and \DADIL, from the two categories for comparison with the proposed method in numerical experiments in \cref{sec:experiments}.

\section{Problem Formulation}
\label{sec:problem_setup}
\textbf{Supervised learning.} We consider the standard framework for supervised learning. Let $X \in \X\subset\mathbb{R}^d$ denote the covariates (features), $Y \in \Y\subset\mathbb{R}$ denote the outcome of interest (label), and $f: \X \to \Y$ denote a prediction model, where $f \in \cal{F}$ for some function space $\cal{F}$. 

\textbf{Multi-Segment Unsupervised domain adaptation.} In the setting of domain adaptation, we consider $N$ segments that span the whole population. Let $S \in \mathcal{S}=\{1,2,...,N_S\}$ denote the segment index, $\{Y_i, X_i, S_i\}_{i = 1}^{n_{tr}}$ and $\{Y'_j, X'_j, S'_j\}_{j = 1}^{n_{te}}$ be the training and testing sets sampled independently from population probability distributions $P$ and $Q$ respectively, where $P \not = Q$. Our goal is to construct segment-specific predictors $f^{(s)}: \X \to \Y$ which perform well on each segment's test distribution $Q^{(s)} \equiv Q(y,x|S = s)$ according to the segment-conditional risk $R_Q[f|s] = \E_{(X,Y) \sim Q^{(s)}}[\ell(Y,f(X))]$, where $\ell(\cdot, \cdot): \Y\times \Y\to \mathbb{R}$ is a loss function. We consider unsupervised domain adaptation where the test labels $Y'_j$s are unobserved. We similarly define $P^{(s)} \equiv P(y,x|S = s)$.

% In order to generalize a model to the test set, one must assume consider some link between the training and the test distributions. Classical assumptions include \textbf{covariate shift} \cite{shimodaira2000improving}
% \begin{equation*}
%     P(y|x) = Q(y|x) \quad P(x) \not = Q(x)
% \end{equation*}
% and \textbf{label shift} \cite{saerens2002adjusting, lipton2018detecting}. 
% \begin{equation*}
%     P(x|y) = Q(x|y) \quad P(y) \not = Q(y)
% \end{equation*}

\textbf{Local distribution shifts.} Classical assumptions include covariate shift \citep{shimodaira2000improving} where $P(y|x) = Q(y|x),~P(x) \not = Q(x)$ and label shift \citep{saerens2002adjusting} where $P(x|y) = Q(x|y), ~P(y) \not = Q(y)$. In our context, we adopt a relaxed version of these shifts  where either covariate shift or label shift holds within each segment throughout this work. These assumptions include:
\begin{itemize}
    \item \emph{Local covariate shift}: $P^{(s)}(y|x) = Q^{(s)}(y|x)$
    \item \emph{Local label shift}: $P^{(s)}(x|y) = Q^{(s)}(x|y)$.
\end{itemize}
We highlight the motivation for these relaxed assumptions with a toy example. Consider a scenario where the manifestation of symptoms $X$ (such as fever) is contingent upon the status of an influenza infection $Y$, and we wish to develop a classifier for this year's flu optimized for both children ($S=0$) and adults ($S=1$) that exhibit differences in symptoms of infection $P(x|y, s = 0) \not = P(x|y, s = 1)$. Here, $P$ represents the training distribution derived from historical data, while $Q$ signifies the test distribution for the current flu season. Across various flu seasons, the prior propensity of contracting the flu may vary, expressed as $P(y|s) \neq Q(y|s)$. However, the distribution of symptoms given a flu status remains consistent, i.e., denoted by $P(x|y, s) = Q(x|y, s)$, thereby illustrating a local label shift as opposed to a global shift.

% \xc{For section 4, how about we do this: 4. Multiply robust estimation ( (1) overview of all steps, (2) give the algorithm with 5 steps (combining the procedue + algorithm) from raw data to final model for each segment, (3) highlight that step 3 and 4 are the core of our proposal, and mention something like "we discuss in details in the next section 4.1 and discuss pragmatic strategies for the first 3 steps in Section 4.2) Section 4.1: Two-stage estimation method: current section 4.1.2, 4.1.3 + other parts related to these before 4.1 in the current text. Section 4.2: Pragmatic strategies for the first 3 steps: current Section 4.3 on clustering + base model learning + current section 4.1.1 on IW} 

\section{Multiply Robust Estimation}  
\label{sec:method}
In this section we introduce our proposed multiply robust estimation method. This approach allows for information sharing across segments and adaptive adjustments for each domain, resulting in enhanced model performance on each specific domain. An overview of the proposal is as follows. We first cluster the segments into several groups and train a base model on each cluster. This enables information sharing across segments with similar data distributions and particularly beneficial for segments with a limited number of samples to leverage strength from others. To adjust for local distribution shifts, we estimate the importance weights to match the training data distribution to test data for each segment. We then learn a domain-specific model by finding the best linear combination of the base models, which we refer as stage 1 estimators. With the stage 1 estimators as `priors', we further refine the estimators using weighted training samples. The pseudo-code of this procedure is presented in \cref{alg:MR_estimator}.

\begin{algorithm}[htb]
\begin{small}
\caption{Multiply Robust Estimator}
\label{alg:MR_estimator}
\begin{algorithmic}
  \STATE \textbf{INPUTS: }Training data: $\{Y_i, X_i, S_i\}_{i=1}^{n_{tr}}$. Testing data: $\{ X'_j, S'_j\}_{j=1}^{n_{te}}$. Weight estimator: $\mathcal{W}: (\{Y_i, X_i\}_{i=1}^{n_{tr}}, \{X_j'\}_{j = 1}^{n_{te}}) \to \hat w$. Base Model Training Algorithm ${\cal H} : \{Y_i, X_i\} \to h_m$. Segment Clustering Algorithm $\mathcal{C}_M: \{Y_i, X_i, S_i\} \to \{\mathcal{S}_m\}_{m = 1}^M$, ${\cal S}_m \subset {\cal S}$. Base training split $\varsigma \in [0,1]$\\
  \STATE \textbf{Step 1}: Randomly partition training indices into $N_{base}$, $N_{tune}$ such that $|N_{base}|/|N| = \varsigma$ \\
  \STATE \textbf{Step 2}: Cluster the segments $\mathcal{C}_M(\{Y_i, X_i, S_i\}_{i = 1}^{n_{tr}}) =  \{\mathcal{S}_m\}_{m = 1}^M $ into $M$ clusters and a final group partition which includes all clusters ${\cal S}_{M + 1} = {\cal S}$ where each $\mathcal{S}_m \subseteq \mathcal{S}$ are subsets of the set of segments.\\
  \STATE \textbf{Step 3}: For each $m$, train a base model using the segments $h_m = {\cal H}\left( \{Y_i, X_i\}_{i \in N_{base} : S_i \in \mathcal{S}_m}\right)$ \\
  \FOR{$s \in \mathcal{S}$}
        \STATE \textbf{Step 4a}: Estimate the weights for segment $s$, $\widehat w_s = \mathcal{W}(\{Y_i, X_i\}_{i = 1: S_i = s}^{n_{tr}}, \{X_j'\}_{j = 1: S'_j = s}^{n_{te}})$
        \STATE \textbf{Step 4b}: (Stage 1 estimator). Using the tuning segment data $\{Y_i, X_i\}_{i \in N_{tune}: S_i = s}$, learn a linear combination of the base classifiers $\widehat f^{1, (s)} = \sum_{m = 1}^M \widehat \beta^{(s)}_m h_m$. 
        \STATE \textbf{Step 4c}: (Stage 2 estimator). Refine a second model $\widehat f^{(s)}_{MR}$  using the training data within the segment $\{Y_i, X_i\}_{i \in N_{tune}: S_i = s}$ weighting each sample using the weights learned in step 4a, and penalizing the difference of $\widehat f^{(s)}_{MR}$ to $\widehat f^{1, (s)}$ 
  \ENDFOR
  \STATE \textbf{OUTPUT:} Collection of refined estimators $\{\widehat f^{(s)}_{MR}\}_{s \in {\cal S}}$
\end{algorithmic}
\end{small}
\end{algorithm}

% We describe the clustering method (${\cal C}_M$) in more detail in \cref{sec: clustering for training base models}. The base model algorithms ${\cal H}$ can be any off-the-shelf supervised learning algorithm and can vary across cluster $m$. The crux of our refinement method are steps 4b) and 4c), which we describe in further detail in \cref{sec: Two stage estimation}. 

% This approach adapts to the segment's complexity while leveraging the strength of models trained on auxiliary data, and is particularly beneficial for smaller segments. 
We draw inspirations from doubly robust method \cite{reddi2015doubly} and multiply robust methods used in causal inference and missing data problems \cite{han2013estimation, chan2013simple}. We refer our proposal as `multiply robust' because the predictive power of any base model can be leveraged in stage 1 estimators. If the linear combination is sub-optimal, further refinement is achieved with stage 2 estimators.

The key innovation of the proposed method is the two-stage estimation, which we delve into the details in \cref{sec: Two stage estimation}. In \cref{sec:training_base_models}, we provide pragmatic implementation strategies for the other components in \cref{alg:MR_estimator} including segment clustering and weighting. We discuss the computational complexity of the proposed method in \cref{sec:computation}.

\begin{remark}
In applications that involve image and text data, it is common to exploit large-scale pre-trained models such as VGG-16 \cite{simonyan2014very}. A straightforward and widely adopted approach to applying our method to text and image data is to utilize the embeddings of a large pre-trained model, treating them as the feature vector. Subsequently, our method can be applied, as outlined in \cref{alg:MR_estimator}, employing a regularized, high-dimensional linear model for both the base-model and the stage 2 estimator classes. While addressing label shift issues typically involves straightforward application of BBSE, caution should be exercised when dealing with high-dimensional embeddings, necessitating careful selection of an appropriate covariate shift weight estimation method. 
\end{remark}

\subsection{Two Stage Domain-adaptive Estimation} \label{sec: Two stage estimation}

We now elaborate the two-stage estimation method. Let $\{h_m\}_{m = 1}^M, h_m : {\cal X} \to { \cal Y}$ denote a set of base models and $\vect{h}$ the vector collection of them. Let $R[f|\mathbf{Y}, \mathbf{X}, ,s; \alpha]$ denote the $\alpha$-weighted empirical training risk of segment $s$, 
% TODO: Update this notation of the empirical risks, maybe just bold the vector of responses. 
$$ R[f|\mathbf{Y}, \mathbf{X},s; \alpha] = \frac{1}{n_s}\sum_{i \in N_s}\alpha(Y_i, X_i) \ell(Y_i, f(X_i))$$
where $N_s$ denotes the indices of the data corresponding to segment $s$: $N_s \subset \{1,2,\dots n\}, n_s = |N_s|$, and $\mathbf{Y}, \mathbf{X}$ denote the entire training data. We abbreviate the unweighted empirical risk $R[f|\mathbf{Y}, \mathbf{X},s; \vect{1}] = R[f|\mathbf{Y}, \mathbf{X},s]$. We let $\mathcal{B} \subseteq \mathbb{R}^M$ denote a convex subset for the parameters. Lastly, let $\Omega[f,g]$ denote a penalty function for the deviation of $f$ and $g$ (for example $\norm{f - g}_{\calF}$ for a function space ${\cal F}$, or $\norm{\beta_f - \beta_g}_2$ for the parameters of a linear model), where $\Omega[f] = \Omega[f,0]$. Given a set of estimated weights on the test set $\widehat w_s$, and a regularization parameter $\lambda'_s$\footnote{This can be any type of regularization, such as restricting the number of trees in an ensemble, or penalizing the coefficients in a linear model.}. Steps 4b and 4c are presented as follows 
\begin{align*}
    \widehat \beta^{(s)} &= \argmin_{\beta \in {\cal B}} R[\beta^\intercal \vect{h} | \mathbf{Y}, \mathbf{X},s] \\
    &\text{(Stage 1 Estimator)} \\
    \widehat f^{(s)}_{MR} &= \argmin_{f \in {\cal F}} R[ f| \mathbf{Y}, \mathbf{X},s; \widehat w_s] + \lambda_s' \Omega[\widehat \beta^{(s) \intercal} \vect{h} , f]\\
    &\text{(Stage 2 Estimator)} 
\end{align*}
\subsubsection{Stage 1 Estimators} \label{sec: stage 1 estimators}
When ${\cal B} = \mathbb{R}^M$, the stage 1 estimator is a simple linear combination of base models. For example, if the loss function is the mean square loss this can be computed using linear regression or logistic regression when the $\ell$ is the cross entropy loss. In practice, the performance of the model at this stage is influenced by the quality of the base models. Therefore, as with all machine learning problems, it is crucial to select a method that is suitable for the specific prediction task at hand, including kernel methods, ensemble methods, and others.

To avoid overfitting in the first stage, one may consider restrictions on ${ \cal B}$. Our default restriction is to consider the unit ball of coefficients $\beta \in \mathcal{B}_1 = \{\beta : \norm{\beta}_2 \leq 1 \}$, which includes convex combinations of the original model set. The unit ball restriction can be simply implemented through a successive application of ridge regression. for a specific regularization parameter $\lambda$
\begin{align*}
    \widehat \beta_{\lambda} &= \argmin_{ \beta \in \mathbb{R}^M} \sum_{i \in N_s}\ell(y_i, \beta^\intercal \vect{h}(X_i)) + \frac{\lambda}{2}\norm{\beta}^2 \\
    =  &\argmin_{\beta: \norm{\beta}_2 \leq \nu}  \sum_{i \in N_s}\ell(Y_i, \beta^\intercal \vect{h}(X_i)) 
\end{align*}
there is a corresponding $\nu$ corresponding to the constrained optimization problem. We can find the corresponding $\lambda$ via a binary search to find the smallest value of $\lambda \leq \lambda_{max}$ such that $\norm{\widehat \beta_{\lambda}}_2 \leq 1$, for some preset $\lambda_{max}$.

% then a stage 1 estimator can be constructed using simple linear regression where the outputs of the base models $h_m(X)$ are treated as covariates. When $\ell$ is the log-loss associated with binary classification, then this can be constructed though simple logistic regression, in this case the covariates involved are the log-odds predictions from the base models $h_m(X)$. 

% Naturally, there are also cases when one must train base models and fine-tune a dataset on the same original set. We clarify this strategy in \cref{sec: base_model_learning}. We also discuss choosing the set $\mathcal{B}$ in \cref{appendix: linear combination restriction}

\subsubsection{Stage 2 Estimators}
The stage 2 estimator requires penalization towards a base model. In the case of the mean square loss and a form of the penalty $\Omega[f,g] = \Omega[f - g]$, then this can be computed by fitting a regularized model fit to the residuals of the stage 1 model, $\delta Y_i = Y_i - \widehat \beta^{(s) \intercal} \vect{h}(X_i) $. For other loss functions, such as those involved in classification, one can include a base margin $\widehat \delta_i = \widehat \beta^{(s) \intercal} \vect{h}(X_i) $ then one can learn a model $\tilde g$ such that $f = \tilde g + \widehat \delta_i$ by penalizing the complexity of $\tilde g$, through $\Omega[\tilde g]$. This functionality is available in popular packages such as XGBoost \citep[\XGB]{chen2016xgboost} and can be similarly implemented in many common function classes such as SVMs, tree-based methods and neural networks. In the case of XGBoost $\Omega[\tilde g]$ involves penalties on the complexity of the second stage model, i.e. the number of trees, tree depth etc. We illustrate a cross validation procedure for our method in \cref{sec: cross_validation}.

\subsection{Training Base Models and Learning Weights}
\label{sec:training_base_models}
To complete the understanding of \cref{alg:MR_estimator}, we discuss pragmatic strategies for the remainder of the steps on obtaining base models and importance weights. For the former, since we use off-the-shelf methods for training the base models, we focus on segment clustering below.

\subsubsection{Segment Clustering for Base Model Training}\label{sec: clustering for training base models}
The trade-off in training base models is that, it is desirable to train models on more segments to reduce variance, while the variability across base models is also crucial to provide a richer function set for the stage 1 estimator. We balance the trade-off by first clustering the segments into groups $\mathcal{S}_m \subset \mathcal{S}$, based on the similarity of covariates and responses, and then training one base model using all the training data within each segment cluster $\{Y_i, X_i\}_{i: S_i \in \mathcal{S}_m }$. 

One can cluster the segments with domain knowledge, such as a continent when the domains are countries. When external knowledge is not available to define clusters of segments, we cluster the segments into groups where the joint training distribution $P^{(s)}(y,x)$ are similar for $s \in \mathcal{S}_m$. A common choice to measure the similarity between two distributions $P$ and $Q$ is Maximum Mean Discrepancy (MMD) \cite{smola2006maximum}. With empirical measures $\widehat P, \widehat Q$ and corresponding samples $\{u_i\}_{i = 1}^{m_1} $ and $\{v_i\}_{i = 1}^{m_2}$, MMD can be computed explicitly in \eqref{eq: MMD_empirical} where $k(\cdot , \cdot)$ is a positive definite, symmetric kernel function. 
\begin{align}
    D_{MMD}(\widehat P, \widehat Q) &= \frac{1}{m_1(m_1 - 1)} \sum_{i = 1}^{m_1}\sum_{j \not = i}^{m_1} k(u_i,u_j) \label{eq: MMD_empirical}\\
    + \frac{1}{m_2(m_2 - 1)} &\sum_{i = 1}^{m_2}\sum_{j \not = i}^{m_2} k(v_i,v_j) - \frac{2}{m_1m_2} \sum_{i = 1}^{m_1}\sum_{j = 1}^{m_2} k(u_i, v_j) \nonumber
\end{align}
We use \cref{eq: MMD_empirical} to compute a distance matrix between segments and perform hierarchical clustering as described in \cref{alg:segment_clustering}. The choice of kernel may be determined by the datatype. For example, we choose the delta kernel $k_y(y,y') = 1 - I(y = y')$ for categorical dimensions, where $I$ is the indicator function, and choose the Gaussian kernel for continuous dimensions. The joint kernel $k_{y,x}$ can be computed as the product of the $k_x$ and $k_y$ kernels, $k_{y,x}((y,x), (y',x')) = k_y(y,y')k_x(x,x')$. In our experiments, we generally adhere to a rule of thumb where we select the largest $M$ in \cref{alg:segment_clustering} that avoids generating singleton clusters, thereby ensuring a reasonably large function class in the base models. However, we note that generating an excessive number of distinct clusters could potentially restrict information sharing across segments, thereby diminishing the overall performance. One could also treat $M$ as a tuning parameter and select it through cross validation.

\begin{algorithm}[htb]
\begin{small}
\caption{Clustering Segments for Base Model Learning}
  \label{alg:segment_clustering}
\begin{algorithmic}[1]
  \STATE {\bfseries Input:} A kernel function: $k(\cdot, \cdot)$ and a training dataset $\{(Y_i,X_i,S_i)\}_{i = 1}^{n_{tr}}$, number of desired clusters $M$.
  \FOR{ each segment pair $(s, s') \in \mathcal{S}^2$ }
  \STATE Compute $\widehat D_{s s'} = D_{MMD}(\widehat P^{(s)}( y, x), \widehat P^{(s)}(y, x))$ where $\widehat P^{(s)}(y, x)$ corresponds to the empirical distribution of training data $(Y,X)$ in segment $s$.
  \ENDFOR
  \STATE Apply hierarchical agglomerative clustering to $\widehat D$ until $M$ clusters are obtained.
\end{algorithmic}
\end{small}
\end{algorithm}

\subsubsection{Importance Weight Estimation} 
\label{sec:importance_weights}
Lastly, we discuss options for addressing local distribution shifts. In classical covariate or label shift, one can weight the observations in the training distribution to match the test distribution. These sample weights can be used on the training distribution to debias the empirical training risk when the goal is minimizing the test risk. These weights, known as the importance weights, under covariate shift are defined by the ratio of densities (when $X$ is continuous), $w(x) = \frac{dQ(x)}{dP(x)}$ (also known as the Radon-Nikodym derivative), and under label shift are defined by the ratio of class frequencies $w(y) = \frac{Q(y)}{P(y)}$.

Under our local covariate and label shift assumptions, we can define the analogous importance weights conditional on a segment $s$. 
\begin{align*}
    w_s(x) &= dQ^{(s)}(x)/dP^{(s)}(x) \text{ local covariate shift} \\
    w_s(y) &= Q^{(s)}(y)/P^{(s)}(y) \text{ local label shift} 
\end{align*} 
We can then estimate the weights using the methods discussed in \cref{sec:related_work} in each segment, such as PRM or KMM for covariate shift or BBSE for label shift. 

\subsection{Computational Complexity}
\label{sec:computation}
The major computation burden in the proposed multiply robust method occurs at the segment clustering and 2-stage estimation. In \cref{alg:segment_clustering} for segment clustering, the distance matrix computation requires $\mathcal{O}(n_{tr}^2)$ operations, which is typically the most computationally expensive part of the procedure. The agglomeration clustering step has  complexity $\mathcal{O}(M N_S^2)$ which tends to be much smaller as $N_S \ll n_{tr}$. 

% There are two main components of the algorithmic complexity of \cref{alg:MR_estimator}. The first is the clustering algorithm, which we describe in \cref{alg:segment_clustering}. 

Suppose the complexities of training stage 1 and stage 2 estimators are $c_1(n_s)$ and $c_2(n_s)$ for segment $s$ in step 4 of \cref{alg:MR_estimator}.
% For steps 4b and 4c we suppose $r_1(n_s)$ is the algorithmic complexity of training the stage 1 estimator and $r_2(n_s)$ is the algorithmic complexity of training the stage 2 estimator. 
The total algorithmic complexity is $\sum_{s=1}^{N_S} c_1(n_s) + c_2(n_s)$. For example, if we adopt linear regression in stage 1 and \XGB in stage 2, then $c_1(n_s) = \mathcal{O}(n_s M^2 + M^3)$ and $c_2(n_s) = \mathcal{O}(TD_Tdn_s + pn_s\log(n_s))$ resulting in the total complexity scaling log-linearly with $n_{tr}$. Here $T$ is the total number of trees and $D_T$ is the maximum tree depth and $d$ is the dimension of $X$. The base model training can be analyzed similarly as the stage 2 estimator, but for training $M$ models. It is worth noting that each of the stage 1 and 2 estimators can be computed in parallel, also allowing for further algorithmic improvements in practice.

\section{Theoretical Analysis}
\label{sec:theory}
We now present a theoretical justification for the proposed multiply robust estimation method. As our goal is to construct a model that is adaptive to each segment with high predictive accuracy, we provide a performance guarantee by establishing a generalization bound on the test risk. Our proof leverages results from \citet{ostrovskii2021finite} to establish the convergence rate of the stage 1 model. We integrate a modified version of the risk minimization technique presented in \citet{reddi2015doubly}. Notably, our methods are adaptable to any weight estimator, allowing us to separate the bound based on the quantities associated with the weight estimator $\widehat w_s$ and the function estimation of $\widehat f^{(s)}_{MR}$. The assumptions herein, though restrictive, are used to provide intuition on the performance of the algorithm and are not necessary for the algorithm to be effective in practice. % For simplicity, we assume base models are given For simplicity, we assume pre-existing base models (or models not trained on the fine-tuning dataset), and we discuss fine-tuning using data splitting in \cref{sec: cross_validation}.

% We now present a theoretical justification for \cref{alg:MR_estimator}. Our goal is to construct a predictor that performs well on new unseen data, so we provide a generalization bound to give a performance guarantee on the test risk. Our proof leverages Theorem 1.1 from \cite{ostrovskii2021finite} to establish the convergence rate of the stage 1 model. We integrate a modified version of the risk minimization technique presented in \cite{reddi2015doubly}. Notably, our methods are adaptable to any weight estimator $\widehat w_s$, allowing us to separate the bound based on the quantities associated with the weight estimator and the function estimation of $\widehat f^{(s)}_{MR}$. For simplicity, we assume pre-existing base models (or models not trained on the fine-tuning dataset), and we discuss fine-tuning using data splitting in \cref{sec: cross_validation}.

Recall the training risk for segment $s$, $R_P[f|s] = \E_{(Y,X) \sim P^{(s)}}[\ell(Y,f(X))]$ and denote the corresponding test risk $R_Q[f|s] = \E_{(Y,X) \sim Q^{(s)}}[\ell(Y,f(X))]$, and $\ell(y,\varrho)$ is a loss function which is convex in $\varrho$ for each $y$, such as the mean square loss for regression or cross entropy loss for classification. Our goal is to minimize the test risk $R_Q[f|s]$ for any given segment $s$. We focus on one segment to establish the theory, which can then be applied to all of the segments. 

Suppose a set of base models $\vect{h}=\{h_m\}_{m=1}^M$ and the candidate model $f$ are contained in a Reproducing Kernel Hilbert Space (RKHS) $\calF$ equipped with the norm $\norm{\cdot}_{{\cal F}}$. Let $\beta^{*(s)} = \argmin _{\beta \in {\cal B }}R_P[\beta^\intercal \vect{h}|s]$ be the optimal linear combination and denote the following quantities derived from the base models
\begin{align*}
    Z_{mi} &= h_m(X_i),  \quad H = \E[\nabla^2_{\beta} \ell(Y,Z^\intercal\beta^{*(s)})|s] \\
     G &= \E[\nabla_{\beta} \ell(Y,Z^\intercal\beta^{*(s)}) \nabla_{\beta} \ell(Y,Z^\intercal\beta^{*(s)})^\intercal |s] \\
    M_{eff} &= \text{Tr}\left( H^{-1} G\right), \quad M_F = \max_{m} \norm{h_m}_{\calF} \\
    \Sigma &= \E[ZZ^\intercal|s], \quad \text{Where } Y_i, X_i \sim_{iid} P^{(s)}(y,x).
\end{align*}

% and the candidate fine-tuning model $\{h_m\}_{m = 1}^M, f \in {\cal F}$ are contained in a Reproducing Kernel Hilbert Space (RKHS) equipped with the norm $\norm{\cdot}_{{\cal F}}$. We denote the following random variables and parameters involving the base models

% Let $\beta^{*(s)}$ optimal linear combination $\beta^{*(s)} = \argmin _{\beta \in {\cal B }}R_P[\beta^\intercal \vect{h}|s]$.
% \begin{align*}
%     Z_{mi} &= h_m(X_i),  \quad H = \E[\nabla^2_{\beta} \ell(Y,Z^\intercal\beta^{*(s)})|s] \\
%      G &= \E[\nabla_{\beta} \ell(Y,Z^\intercal\beta^{*(s)}) \nabla_{\beta} \ell(Y,Z^\intercal\beta^{*(s)})^\intercal |s] \\
%     M_{eff} &= \text{Tr}\left( H^{-1} G\right), \quad M_F = \max_{m} \norm{h_m}_{\calF} \\
%     \Sigma &= \E[ZZ^\intercal|s], \quad \text{Where } Y_i, X_i \sim_{iid} P^{(s)}(y,x).
% \end{align*}
Let $\ell'(y,\varrho) = \frac{d}{d\varrho} \ell(y,\varrho)$ and similarly for higher order derivatives. Assume that each $Q^{(s)}$ is absolutely continuous with respect to $P^{(s)}$ such that there exists a set of bounded importance weights $w_s(x,y) = \frac{dQ^{(s)}(x,y)}{dP^{(s)}(x,y)}$. 
\begin{assumption}[Regularity Conditions] \label{assumption: regularity conditions}
    We assume that the following regularity conditions hold: 
\begin{align}
    |\ell(y,\varrho)| &\leq L, \quad |\ell(y,\varrho) - \ell(y,\varrho')| \leq L |\varrho - \varrho'| \label{assumption: bounded Lipschitz}\\
    |\ell'''(y,\varrho)| &\leq |\ell''(y,\varrho)|, \quad \text{ for all } \varrho \in \mathbb{R} \label{assumption: self concordance}\\
    Z_{mi}, \ell'(Y, Z^\intercal&\beta^{*(s)})Z,  \ell''(Y,Z^\intercal\beta^{*(s)} )^{1/2}Z \sim \text{subG} \label{assumption: SubGaussian}\\ 
    \Sigma &\preceq \rho  H \text{ for some } \rho > 0  \label{assumption: Covariance Condition}\\
    \tilde{\lambda}_1 &= \lambda_{min}( H) > 0  \label{assumption: Alignment of Base Models} \\
    w_s(y,x) &\leq \eta \text{ for all } x \in {\cal X},y \in {\cal Y}  \label{assumption: bounded weights} 
\end{align}
where $ \xi \sim \text{subG}$ means that the $\xi$ is sub-Gaussian (i.e., $\mathbb{E}[\exp(s(\xi - \mathbb{E}[\xi]))] \leq \exp(\sigma^2s^2/2)$ for some $\sigma>0$ and for all $s \in \mathbb{R}$), and $\preceq$ denotes the semidefinite ordering. 
\end{assumption}

\begin{remark}
In \cref{assumption: regularity conditions}, \cref{assumption: bounded Lipschitz} is a simple boundedness and Lipschitz continuous property for the loss function. \cref{assumption: self concordance} is the 2-self concordance property and \cref{assumption: SubGaussian,assumption: Covariance Condition} are standard regularity assumptions for the finite sample convergence of parametric models as illustrated in \citet{ostrovskii2021finite} and satisfied for many convex loss functions such as those arising from exponential families. \cref{assumption: Alignment of Base Models} ensures that the optimal $\beta^{*(s)}$ is identifiable, as no model will be a linear combination of the others, and \cref{assumption: bounded weights} ensures that the true importance weights are bounded.
\end{remark}

For each segment, let $f^{*(s)}_{q, \nu_{q,s}}$ denote the minimizer of the constrained test risk 
\begin{align}
    f^{*(s)}_{q, \nu_{q,s}} &= \argmin_{f \in \calF: \norm{f}_{\calF} \leq \nu_{q,s}}R_Q[f|s] \label{eq:population_shift_solution}
\end{align}
where the target function class is bounded since an unregularized solution may not have a bounded norm. 
Let $\widehat w_s$ denote an estimate of the segment-specific weights $w_s$.

\begin{theorem}[Generalization Bound For a Multiply Robust Estimator]\label{thm:multiple_robust_rate}
Suppose that \cref{assumption: regularity conditions} holds. Then denote the 2-stage estimator illustrated in \cref{alg:MR_estimator},
\begin{align*}
    \widehat \beta^{(s)} &= \argmin_{\beta \in \mathcal{B}} R[\vect{h}^\intercal\beta|Y, X, s] \\
    \widehat f_{MR}^{(s)} &:= \argmin_{f \in \mathcal{F}} R[f|Y, X,s;\widehat w_s] + \lambda'_s \norm{f - \widehat \beta^{(s) \intercal} \vect{h}}^2_{\calF} \\
    &= \argmin_{f \in \mathcal{F}: \norm{f - \widehat \beta^{(s) \intercal} \vect{h}} \leq \nu'} R[f|Y, X,s,\widehat w_s] 
\end{align*}
where $\nu' = \norm{\beta^{*(s) \intercal}\vect{h} - f^{*(s)}_{\nu_{q,s}}}_{\calF} + C\frac{M_F}{\tilde{\lambda}_1}\sqrt{\frac{M \log(e/\delta)}{n_s}}$ and $C$ is a universal constant. 
For $n \geq  M_{eff} M \log(e M / \delta)$   then with probability at least $1 - 2\delta$
\begin{equation}
    R_Q[\widehat f_{MR}^{(s)}|s]  \leq R_Q[f^{*(s)}_{\nu_{q,s}}|s] + \Delta_{MR, w_s, s} + \Delta_{MR, f, s}
\end{equation}
where $ \Delta_{MR, w_s, s}$ is the bound related to the learning of the correct weights, and $\Delta_{MR, f, s}$ is the functional complexity part of the bound. 
\begin{align*}
    \Delta_{MR, w_s} &:= 2L\frac{\norm{\widehat w_s - w_s}_1}{n_s} \\
    \Delta_{MR, f, s} &:= 2\eta L \bigg( \frac{2\nu'\sqrt{\text{tr}(\vect{K}_s)}}{n_s} + 3\sqrt{\frac{\log(2/\delta)}{2n_s}} + \\
    &\frac{\norm{\widehat \beta^{(s) \intercal} \vect{h}(\vect{X}_s)}_2}{n_s}\bigg) 
\end{align*}
where $\norm{\widehat w_s - w_s}_1 = \sum_{i \in N_s} | \widehat w_s(X_i,Y_i) - w_s(X_i,Y_i)|$ and $\vect{K}_s$ is the Gram matrix corresponding to the kernel function $k(\cdot, \cdot)$ associated with the space ${\cal F}$, $\vect{K}_{s, ij} = k(X_i,X_j): i,j \in N_s$, and $\norm{\widehat \beta^{(s) \intercal} \vect{h}(\vect{X}_s)}_2 = \sum_{i \in N_s}(\widehat \beta^{(s) \intercal} \vect{h}(X_{i}))^2$
\end{theorem}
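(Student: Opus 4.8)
The plan is to decompose the excess test risk $R_Q[\widehat f_{MR}^{(s)}|s] - R_Q[f^{*(s)}_{\nu_{q,s}}|s]$ into two additive pieces: a term capturing the error from using estimated weights $\widehat w_s$ in place of the true weights $w_s$, and a term capturing the functional estimation error of the stage 2 minimizer over the constrained ball. First I would establish that the stage 2 estimator, written in its constrained form, lives in a ball of radius $\nu'$ around $\widehat\beta^{(s)\intercal}\vect h$. To justify the stated $\nu'$, I would invoke the convergence result from \citet{ostrovskii2021finite} to control $\norm{\widehat\beta^{(s)} - \beta^{*(s)}}$ (hence $\norm{\widehat\beta^{(s)\intercal}\vect h - \beta^{*(s)\intercal}\vect h}_{\calF}$) at rate $\tfrac{M_F}{\tilde\lambda_1}\sqrt{M\log(e/\delta)/n_s}$, using \cref{assumption: SubGaussian,assumption: Covariance Condition,assumption: Alignment of Base Models} together with the sample-size requirement $n \geq M_{eff} M \log(eM/\delta)$; a triangle inequality against $\norm{\beta^{*(s)\intercal}\vect h - f^{*(s)}_{\nu_{q,s}}}_{\calF}$ then gives the ball radius $\nu'$ in which $f^{*(s)}_{\nu_{q,s}}$ is feasible for the stage 2 problem, which is the key structural fact enabling the comparison.

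Next I would handle the weight-estimation term $\Delta_{MR, w_s}$. The idea is to compare the true-weighted empirical risk $R[f|\vect Y,\vect X,s; w_s]$ with the estimated-weighted risk $R[f|\vect Y,\vect X,s;\widehat w_s]$ uniformly in $f$: their difference is $\tfrac{1}{n_s}\sum_{i\in N_s}(\widehat w_s - w_s)(X_i,Y_i)\,\ell(Y_i,f(X_i))$, which by the boundedness of the loss in \cref{assumption: bounded Lipschitz} is bounded by $L\norm{\widehat w_s - w_s}_1/n_s$. Accounting for this substitution on both the objective evaluated at $\widehat f_{MR}^{(s)}$ and at the comparator yields the factor-of-two $2L\norm{\widehat w_s - w_s}_1/n_s$, cleanly isolating the weight error from everything else. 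This mirrors the separation strategy adapted from \citet{reddi2015doubly}.

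For the functional complexity term $\Delta_{MR, f, s}$ I would bound the gap between the true-weighted empirical risk and the test risk $R_Q[f|s]$ uniformly over the radius-$\nu'$ ball, since $\E_{P^{(s)}}[w_s\,\ell] = \E_{Q^{(s)}}[\ell]$ makes the true-weighted empirical risk an unbiased estimate of $R_Q[f|s]$. A standard symmetrization and Rademacher-complexity argument in the RKHS $\calF$ produces the Gram-matrix term $2\nu'\sqrt{\operatorname{tr}(\vect K_s)}/n_s$ (the empirical Rademacher complexity of the ball centered at $\widehat\beta^{(s)\intercal}\vect h$ scales with the trace of $\vect K_s$), the offset term $\norm{\widehat\beta^{(s)\intercal}\vect h(\vect X_s)}_2/n_s$ coming from recentering the ball away from the origin, and a McDiarmid/bounded-differences concentration term $3\sqrt{\log(2/\delta)/2n_s}$; the prefactor $2\eta L$ absorbs the Lipschitz constant and the weight bound $\eta$ from \cref{assumption: bounded weights,assumption: bounded Lipschitz}. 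Combining the comparison of $\widehat f_{MR}^{(s)}$ against the feasible $f^{*(s)}_{\nu_{q,s}}$ through the empirical objective with these two uniform deviation bounds, and taking a union bound over the two high-probability events (hence the $1-2\delta$), yields the claim.

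The main obstacle I anticipate is the first step: correctly invoking the \citet{ostrovskii2021finite} finite-sample guarantee to obtain the $\nu'$ radius with the right dependence on $M$, $M_F$, $\tilde\lambda_1$, and $M_{eff}$, and verifying that the self-concordance and sub-Gaussian hypotheses (\cref{assumption: self concordance,assumption: SubGaussian}) translate into their required conditions under the sample-size threshold $n \geq M_{eff} M\log(eM/\delta)$. Getting the Rademacher-complexity constants to match the stated form of $\Delta_{MR,f,s}$ exactly, particularly the recentering offset term, will also require care but is more routine than the stage 1 rate.
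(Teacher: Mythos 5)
Your proposal is correct and follows essentially the same route as the paper's proof: establishing feasibility of $f^{*(s)}_{\nu_{q,s}}$ in the radius-$\nu'$ ball via the Ostrovskii--Bach rate for $\widehat\beta^{(s)}$ plus a triangle inequality (the paper's Lemma~\ref{lem:linear_combination}), isolating the weight error by H\"older/boundedness of the loss (Lemma~\ref{lem:importance_weight}), controlling the uniform deviation over the recentered ball by a Rademacher-complexity bound with the Gram-matrix trace and offset terms (Lemma~\ref{lem:Rademacher_complexity}), and chaining these with a union bound to obtain the $1-2\delta$ guarantee. The decomposition, the key lemmas, and the factor-of-two bookkeeping all match the paper's argument.
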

\begin{remark}[Error decomposition]
We split the components $\Delta_{MR, w_s}$ and $\Delta_{MR, f, s}$ so that the estimation rate of our method is separated from that of learning the importance weights. We discuss the estimation rates for different weighting methods in \cref{appendix: importance weights estimation bounds}. In particular, PRM and BBSE methods can guarantee estimation of the correct weights, at rates of $\Delta_{MR, w_s, s} = \mathcal{O}(n_s^{-1/4})$ and $\mathcal{O}(n_s^{-1/2})$ respectively. 
\end{remark}

\begin{remark}[Comparison with global estimators]
Unless the minimizers $f^{*(s)}_{\nu_{q,s}}$ are identical across all segments $s$, any single competing estimator $\tilde f \in \mathcal{F}$ with $\norm{\tilde f}_{\mathcal{F}} \leq \nu_{q,s}$ will be asymptotically sub-optimal across every group. This includes global estimators that are trained on all segments, such as the standard doubly robust (\DR) estimator we compare in \cref{sec:experiments} \cite{reddi2015doubly}. 
\end{remark}

\begin{remark}[Comparison with doubly robust estimator]
An alternative competitor one could suggest is to apply the \DR estimator separately over every segment. Instead of employing a linear combination for the first stage, the \DR learns an estimate $\widehat f^{(s)}_{p, \nu_{p,s}} \in {\cal F}$ trained on the unweighted empirical risk. Our comparison focuses on the functional complexity component of our bounds. They include a term analogous to our $\nu'$ in their bound, which we denote as $\nu_{DR}'$. However, they demonstrate $\nu_{DR}' =\norm{f^{*(s)}_{\nu_{p,s}} - f^{*(s)}_{\nu_{q,s}}}_{{\cal F}} + \mathcal{O}(n_s^{-1/4})$, where $f^{*(s)}_{\nu_{p,s}}$ represents the population optimizer of the unweighted problem. In contrast, our first stage estimator is expressed as $\nu' = \norm{\beta^{*(s) \intercal}\vect{h} - f^{*(s)}_{\nu_{q,s}}}_{\calF} + \mathcal{O}(n_s^{-1/2})$, suggesting an improved rate for segments with small samples. Moreover if $\beta^{*(s) \intercal}\vect{h}$ proves to be a better approximation through similar segments to $f^{*(s)}_{\nu_{q,s}}$ then we can improve on the constant factor as well. 
\end{remark}
\begin{remark}[Curse of dimensionality]
In this analysis, it's important to acknowledge that our risk minimization is centered around comparing the estimator to the population minimizer within the function class of the Reproducing Kernel Hilbert Space (RKHS), thereby circumventing the curse of dimensionality. However, if one aims to minimize $R_Q[\cdot|s]$ across a broader function class we would expect to observe a dependence on the dimension of the features. Additionally, under covariate shift, the curse of dimensionality could potentially affect the estimation of the weights $\Delta_{MR, w_s}$. Nonetheless, our theoretical findings offer valuable insights into our method's behavior. 
\end{remark}

\section{Experiments} 
\label{sec:experiments}
We evaluate the methods on both simulated and real data. We use \textbf{mean squared error} (MSE) for regression tasks and \textbf{cross entropy} (CE) for classification tasks for evaluation. Standard errors are reported in parentheses in the following tables. More implementation details such as weights estimation and model hyperparameters are included in \cref{appendix: real data experiment details}. The code is available at \href{https://github.com/facebookresearch/multiply_robust}{github.com/facebookresearch/multiply\_robust}.

% Where cross-validation is performed, we refer to the set of parameters in \cref{appendix: empirical_results}, otherwise we refer to the default parameters of the models in \cref{appendix: empirical_results}. All cross-validation procedures are $K = 5$ fold. 

% \begin{equation*}
%     \sum_{i = 1}^n (y_i - \widehat y_i)^2
% \end{equation*}

% \begin{equation*}
%     \sum_{i = 1}^n y_i\log(\widehat y_i) + (1 - y_i)\log(1 - \widehat y_i)
% \end{equation*}
% where $\widehat y_i$ is the predicted probability. 

We compare the proposed multiply robust method (\MR) against several existing alternatives. We include a doubly robust\footnote{Though doubly robust methods were introduced to adjust for covariate shift, they can also handle label shift.} model (\DR)  and a version of \DR that simply includes the segments as hot features, which we refer to as \DRSF. We also compare two multi-source domain adaptation methods \MFSAN \cite{zhu2019aligning} and \DADIL \cite{montesuma2023multi}. Since tree ensemble methods have demonstrated superior performance on tabular data learning \citep{prokhorenkova2018catboost, shwartz2022tabular}, we also include \XGB as a baseline and use \XGB to construct base models for both \MR and \DR.

% We also compare two state-of-the-art multi-source domain adaptation methods \MFSAN \cite{zhu2019aligning} and \DADIL \cite{montesuma2023multi}. 

% When fitting each of the multiply robust models, we keep $20\%$ of the data as a holdout set used for the first stage model and fine-tune on the entire training set. 

In summary, we find that the proposed \MR consistently outperforms the competitors. In particular, \MR achieves the lowest test losses on the majority of segments (see results in \cref{fig: relative_CE} and \cref{appendix: empirical_results}) on all datasets. Since \XGB and \DR do not account for the distributional heterogeneity across segments, their performances are typically sub-optimal. The improvement of \DRSF over \DR in most cases suggests that including one-hot segment feature helps the model adapt to different segments, though with limited capability. The performance of \MFSAN and \DADIL are sensitive to hyperparameter settings. Even after fine-tuning (see \cref{appendix: tuning_parameters} for more details), they still cannot achieve comparable performance to \MR and \DRSF. The issue with \MFSAN lies in its attempt to align the predictions made by all the classifiers learned from different sources (Equation (6) in their paper). This alignment penalty in \MFSAN could potentially hinder its performance, as it contradicts the goal of leveraging the heterogeneity of data from various sources. On the other hand, the dictionary learning approach, which is the core of \DADIL, assumes that the data distribution in each target domain can be well represented by a mixture of learned distributions. However, this assumption is restrictive and may not hold in real-world applications. Even if this assumption holds, effectively learning the dictionary of moderate to high dimensional densities becomes challenging due to the curse of dimensionality. In addition, it is worth noting that \MFSAN and \DADIL require significantly more computation time compared to other competing methods even with the use of GPU\footnote{Details on computing environment are in \cref{appendix: tuning_parameters}} (\cref{tab: locations-test-metrics}).

% [Steve: MFSAN has a structure where pairs of domain/ target are designed to have an invariant feature space. However, MFSAN tries to align the predictions that are made of all the consitituent classifiers, namely equation 6 in their paper. So this penalty forces the domain-specific classifiers in this model to agree. Thereby this is exactly doing the opposite of fine-tuning. ]

% [Steve: Also neither DADIL nor MFSAN explicitly use a segment in the target domain, meaning that this adaptation would require having distinct distributions of X in the test/target set. Specifically then this becomes a problem when two sources/segments have similar X features, but a different response Y|X,S. The current domains will naturally fail there. 
% Xu's comment on this: I don't this is accurate. If there are multiple target domains, (1) people will use MFSAN on each domain, though perhaps independently; (2) DADIL is able to apply different weights on the dictionary to adapt to different targets.]

% Consistently in our experiments, we find that the \MR method outperforms the others. In particular, we see that \MR achieves the lowest test losses on the majority of segments (these results are in \cref{appendix: empirical_results}). The standard \XGB and \DR do not account for the distributional heterogeneity across segments, and \MR outperforms the ad-hoc inclusion of segments as a covariate in \DRSF. The multi-source methods \DADIL and \MFSAN, tend to perform similarly to \XGB and \DR suggesting a lack of fine-adaptation to each segment. 

\subsection{Simulations}
We first consider a simple covariate shift regression simulation. We simulate a response $y =\alpha_{0s} + \alpha_{1s}^\intercal x + \gamma_s x_{2}(1 + \sin(3x_1)) + \epsilon_i$ where $\epsilon_i \sim N(0,0.3)$. The covariates $x=[x_1,x_2,x_3,x_4]^\intercal \in \mathbb{R}^4$ are sampled from $x \sim \mathcal{N}(\vect{0}_4,I_4)$ and $\mathcal{N}(\vect{1}_4,0.3^2I_4)$ for training and testing data, respectively. To construct segments, we set $\alpha_{0s} \in [-2,2]$, evenly spaced for $s \in \{1,2,\dots, 20\}$ and $\alpha_{1s} =-\alpha_{0s}\vect{1}_4$. 
%evenly spaced in a grid indexed from $s \in \{1,2,\dots, 20\}$ and $\alpha_{1s} =-\alpha_{0s}\vect{1}_4$. 

We experiment with different sample sizes from $1000$ to $10000$ and report the average MSE with 100 repetitions for each. The results are summarized in \cref{fig: simulations_mr_erm}. In addition to the performance comparison summarized above, we observe in this example that the test loss of \MR and \DRSF significantly decreases as sample size becomes larger. This aligns our understanding that the differences in distributions across segments can be learned and adjusted for more effectively with more data. This also complies with the generalization bound in \cref{thm:multiple_robust_rate} which is tighter as $n_s$ becomes larger. However, the other the test losses for four methods does not exhibit a clear decreasing trend. 

% no need to repeat what we summarized before
% which shows that the multiply robust estimator (MR) performs better than the plain \XGB model and \DR. We also note \DRSF has much better performance than \DR by learning segment-specific representation. However, \MR still obtains lower MSE than \DRSF especially for smaller sample sizes due to the fact that \MR can learn the shared structure across segments. \MFSAN and \DADIL tend to perform similarly to \XGB and \DR, as these methods fail to finely tune to each segment. 

\begin{figure}[ht]
% \vskip 0.2in
\begin{center}
\centerline{\includegraphics[width=\columnwidth]{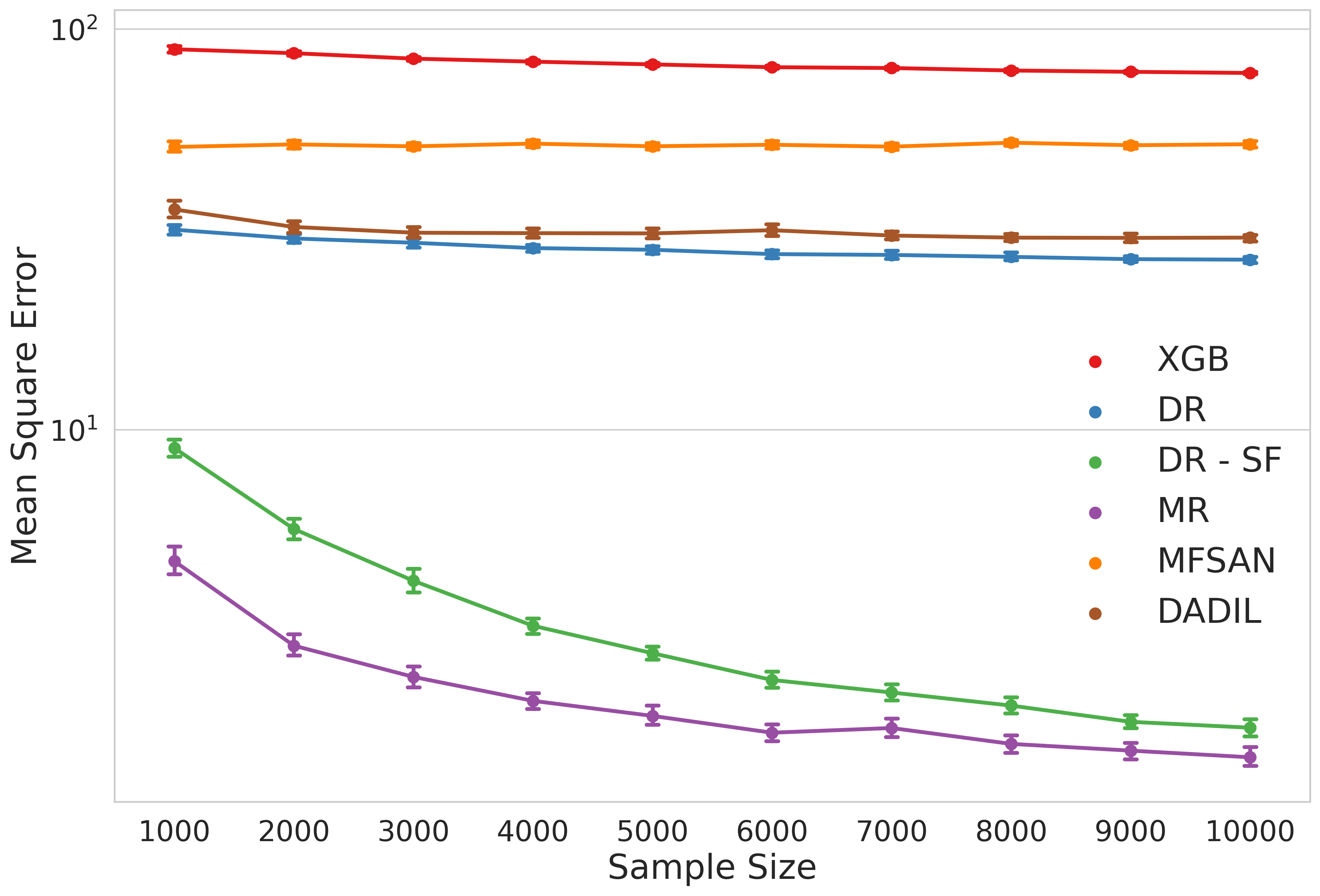}}
\caption{Performance of the proposed \MR method against the competitors in the simulation study. Error bars indicate 99\% confidence intervals.}
\label{fig: simulations_mr_erm}
\end{center}
\vskip -0.4in
\end{figure}

\subsection{Public Datasets}
\begin{table*}[t]

%\vskip 0.15in
\begin{center}
\begin{small}
\caption{Performance of each of the methods on distribution shifted test sets of the UCI and Kaggle datasets.}
\label{tab: uci-test-metrics}
\begin{sc}
\begin{tabular}{lccccr} 
\toprule
 \multirow{2}{*}{MODELS}               & \multicolumn{2}{c}{BINARY CLASSIFICATION [CE]} & MULTICLASS [CE] & \multicolumn{2}{c}{REGRESSION [MSE]} \\ 
      & ADULT               & BANKING            & CUSTOMER        & APARTMENT         & ONLINE NEWS      \\
        \midrule
\XGB           & 1.000 (0.026)       & 1.0 (0.03)         & 1.0 (0.009)     & 1.0 (0.013)       & 1.0 (0.023)      \\
\DR     & 0.953 (0.027)       & 0.976 (0.03)       & 0.978 (0.009)   & 1.0 (0.013)       & 1.011 (0.023)    \\
\DRSF & 0.948 (0.027)       & 0.971 (0.03)       & 1.025 (0.01)    & 0.774 (0.01)      & 0.994 (0.023)    \\
\textbf{\MR}     & \textbf{0.901 (0.028)}       & \textbf{0.925 (0.03)}       & \textbf{0.834 (0.008)}   & \textbf{0.524 (0.008)}     & \textbf{0.949 (0.023)}    \\
\MFSAN         & 1.232 (0.028)       & 1.01 (0.032)       & 0.853 (0.008)   & 1.873 (0.02)      & 1.129 (0.023)    \\
\DADIL      & 3.285 (0.038)       & 1.347 (0.038)      & 1.679 (0.016)   & 2.895 (0.03)      & 1.121 (0.023)   \\\bottomrule
\end{tabular}

\end{sc}
\end{small}
\end{center}

\vskip -0.25in
\end{table*}
We consider two classification datasets (\texttt{Adult} and \texttt{Banking})  and two regression datasets (\texttt{Apartment} and \texttt{Online News}) from the UCI repository \cite{Dua:2019}. We also include a multiclass  \texttt{customer} segmentation dataset \cite{kaggle:2021} from Kaggle. %The number of segments in each dataset ranges between $7$ and $30$. 
We include summary statistics, the segments used for partitioning, and the process by which we construct and correct for dataset shifts in \cref{appendix: real data experiment details}. 

We report the test loss relative to \XGB for all of the methods in \cref{tab: uci-test-metrics}. We observe that \MR outperforms the competing models on all datasets, on binary, multi-classification and regression tasks with smallest standard errors on 4 out of 5 datasets. The results suggest the effectiveness of \MR in terms of both predictive power and robustness. We include the detailed test losses on all segments for all datasets in \cref{appendix: Model performance across segments}.

\subsection{Meta User City Prediction}
We consider a dataset utilized for user city prediction from Meta. The observational unit of the dataset is a (user, city) pair and the features are mostly based on the interaction between a user and a city. A user could have multiple candidate cities and thus be associated with multiple (user, city) pairs. The labels are collected from the users, based on responses to a survey. The responses $y\in\{0, 1\}$ indicate whether a given city is the primary location of a given user. These predictions are then fed through downstream applications for multiple use cases.

The labeled data consists of a curated training dataset and a test dataset, where there are approximately 50,000 users in the training set, 12,000 users in the test set and approximately 50 features. On average there are 5 candidate cities per user in the training set and 11.5 in the test set. As a result, the proportion of positive examples is 0.2 in the training set, compared to a lower proportion of 0.08 in the test set. The training set is derived from survey responses, which typically present fewer candidate cities. Consequently, a local label shift occurs due to the increased prevalence of negative examples in the test set, stemming from the larger pool of candidate cities per user. In this study, we segment users based on their country of origin. We consider a total of 21 segments, with 20 countries that are selected based on factors such as population size, user engagement level, and growth potential. The remaining users from other countries are grouped together as a single segment.

% We examine a dataset used for predicting user city from a large technology company. The observational unit of the dataset is a (user, city) pair. A user may have multiple candidate cities, leading to associations with multiple (user, city) pairs. The labels from user responses to a survey are binary ($y \in \{0,1\}$), signifying whether a city is a user's home. These predictions are utilized in various downstream applications.

% The labeled data is partitioned by user into an 80\% training dataset and a 20\% test dataset. This model utilizes around 50 features for prediction, with approximately 50K users in the training set and 12K users in the test set. The test set includes additional negative survey responses for candidate cities. On average, there are around 5.0 candidate cities per user in the training set and 11.8 in the test set with positivity rates of 0.2 and 0.08 respectively, indicating label shift. The data is segmented into 20 countries, with an `other' category encompassing the remaining countries.

% For the weighted methods, we consider a label shift correction based on predictions from a base-model. We compare our methods to the previously discussed models, along with the production model, and the linear combination of base models only (without further fine-tuning), denoted by MR - LC. We also include two multi-source domain adaptation methods \cite{zhu2019aligning, montesuma2023multi} which are designed to learn invariant features across domains. 

We assess each method's performance using test CE relative to the production model. We also include the \textbf{Brier score}, a measure of calibration of a classifier \cite{brier1950verification}. %, which for binary classes $\{0,1\}$, is equal the mean square error where our prediction is the probability assigned to class $1$ 
 As shown in \cref{tab: locations-test-metrics}, our multiply robust method consistently outperforms others in CE and Brier score. The CE results by country are also visualized in \cref{fig: relative_CE}, demonstrating improved performance compared to the production model across nearly all segments. 
 
\begin{table}[htb]
\begin{small}
\caption{Relative performance of various competing methods on location dataset with production model as baseline, and running time (in minutes) comparison. Both \MFSAN and \DADIL are implemented on GPU, whereas the remaining methods are run on CPU.}
\label{tab: locations-test-metrics}
%\vskip 0.15in
\begin{center}
\begin{sc}
\begin{tabular}{lccc}
\toprule
\multirow{2}{*}{Models} & \multicolumn{2}{c}{Model Performance} & \multirow{2}{*}{Runtime}\\
 & CE & Brier & \\
\midrule
% Production & 1.0 (0.013) & 1.0 (0.014) \\
\XGB & 0.974 (0.012) & 0.978 (0.014) & 0.09\\
\DR & 0.93 (0.012) & 0.935 (0.013) & 0.26\\
\DRSF & 0.917 (0.011) & 0.927 (0.013) & 0.36\\
\textbf{\MR}& \textbf{0.852 (0.012)} & \textbf{0.864 (0.013)} & 5.17 \\
\MFSAN & 1.36 (0.015) & 1.401 (0.017) & 47.36\\
\DADIL & 1.879 (0.021) & 1.974 (0.019) & 59.86\\
\bottomrule
\end{tabular}
\end{sc}
\end{center}
\end{small}
% \vskip -0.25in
\end{table}

\begin{figure}[ht!]
% \vskip 0.2in
\begin{center}
\centerline{\includegraphics[width=\columnwidth]{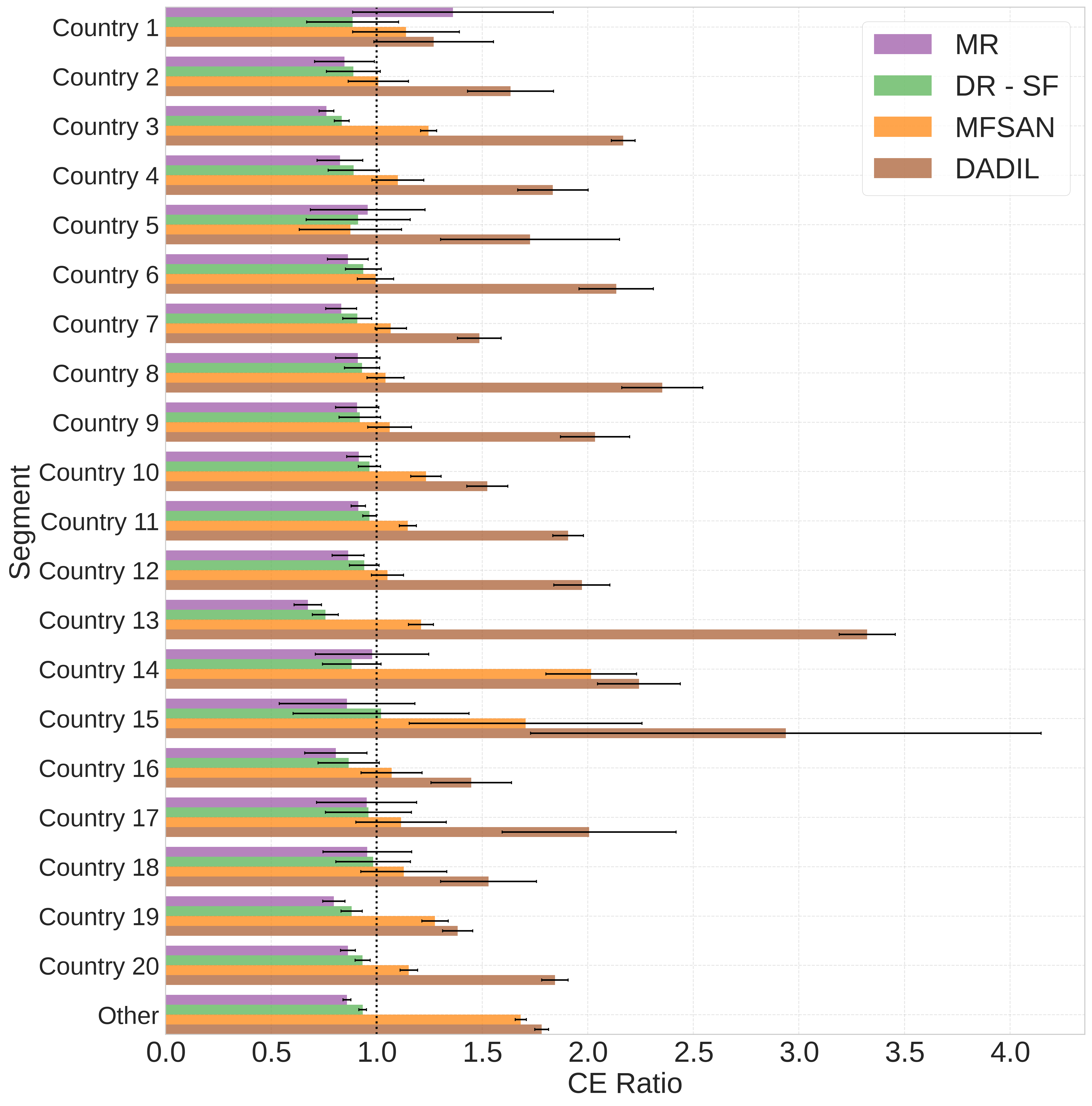}}
\caption{ Test CE relative to the production model (black dotted line) by country. Error bars denote 99\% confidence intervals.}
\label{fig: relative_CE}
\end{center}
\vskip -0.3in
\end{figure}

\section{Conclusion and Future Work}
\label{sec:conclusion}

% When monitoring a model across segments of a population, often one is concerned about a change in distribution across each of the relevant segments. When this is the case, assumptions of covariate and label shift may be too restrictive, though relaxing this assumption to hold only locally over segments, presents statistical challenges. 

% Additionally, other distribution shifts with an identifiable importance weight, such as sparse conditional shift \cite{chen2022estimating}, can also be easily included in our local framework \ref{sec:problem_setup}. 

Existing assumptions for domain adaptation can be overly restrictive when applied to an entire population. To address this, we introduce and study a domain adaptation framework that allows for the relaxation of classical distribution shift assumptions to hold only within segments of the data. This presents a statistical challenge in refining a predictor across all segments of our population. 

To address this challenge, we present a method to refine models across population segments. We provide theoretical justification and empirical evidence demonstrating the enhanced performance of our method compared to existing state-of-the-art domain adaptation techniques. Our approach is easily implementable using off-the-shelf algorithms. 

% To address this challenge, we present a method to refine models across population segments. We offer theoretical justification and empirical evidence showing our method outperforms current domain adaptation techniques. It can be easily implemented with standard algorithms.

%Additionally, our framework allows for the incorporation of additional features unique to each segment into the stage 2 estimators.

% Though we don't consider additional structure across the source segments, a similar assumption is made by \cite{bai2022adapting}, who consider an online learning problem across time periods. 

% \cite{qiu2023efficient} also consider the evaluation of a target risk function based on incorporating the invariant components of source distributions. We consider a weaker assumption, where the relevant information is only on the target distribution conditional on a given segment. The sharing of predictive power in our case is instead through 

% A limitation of our method is that it does not handle segments unseen in the training data. Additionally, we suppose that the relevant segments under which the distribution shift is segmented are known. 

Potential avenues for future work include exploring methods of jointly learning each of the refined models, by incorporating segmentation clustering, treating this as a multitask learning problem. Extensions to image and text data could also be considered, possibly utilizing pre-trained models in place of the base models in \cref{alg:MR_estimator}. These problems however would involve further consideration of the trade-off between domain invariance properties and refinement for each segment. Lastly, our framework of studying a segmented distribution shift may have further connections with algorithmic fairness, as we avoid the problem of large segments dominating the training of the model. However, recent work explores the impact of distribution shift on fairness metrics and may be worth further exploration in segmented settings  \cite{jiang2024chasing}.

\section*{Acknowledgements}
The authors thank Sai Sumanth Miryala, Tao Wu, Xiaolu Xiong, and Andy Yu from the People Data Platform team for their support and feedback throughout this research.

\section*{Impact Statement}
This paper presents work whose goal is to advance the field of Machine Learning. There are many potential societal consequences of our work, none of which we feel must be specifically highlighted here. 
%\section*{Code Release}

% Acknowledgements should only appear in the accepted version.
% \section*{Acknowledgements}

% \textbf{Do not} include acknowledgements in the initial version of
% the paper submitted for blind review.

% If a paper is accepted, the final camera-ready version can (and
% probably should) include acknowledgements. In this case, please
% place such acknowledgements in an unnumbered section at the
% end of the paper. Typically, this will include thanks to reviewers
% who gave useful comments, to colleagues who contributed to the ideas,
% and to funding agencies and corporate sponsors that provided financial
% support.

% In the unusual situation where you want a paper to appear in the
% references without citing it in the main text, use \nocite
% \nocite{langley00}

\bibliography{camera_ready}
\bibliographystyle{icml2024}
%%%%%%%%%%%%%%%%%%%%%%%%%%%%%%%%%%%%%%%%%%%%%%%%%%%%%%%%%%%%%%%%%%%%%%%%%%%%%%%
%%%%%%%%%%%%%%%%%%%%%%%%%%%%%%%%%%%%%%%%%%%%%%%%%%%%%%%%%%%%%%%%%%%%%%%%%%%%%%%
% APPENDIX
%%%%%%%%%%%%%%%%%%%%%%%%%%%%%%%%%%%%%%%%%%%%%%%%%%%%%%%%%%%%%%%%%%%%%%%%%%%%%%%
%%%%%%%%%%%%%%%%%%%%%%%%%%%%%%%%%%%%%%%%%%%%%%%%%%%%%%%%%%%%%%%%%%%%%%%%%%%%%%%
\newpage
\appendix
\onecolumn
% \section{You \emph{can} have an appendix here.}

% You can have as much text here as you want. The main body must be at most $8$ pages long.
% For the final version, one more page can be added.
% If you want, you can use an appendix like this one, even using the one-column format.
\section{Additional Implementation Details}
\subsection{Methods for Learning Importance Weights}
\label{appendix:learning_weights}

We can apply standard techniques for learning the importance weights of the function class. Firstly, we consider learning covariate shift weights through discriminative learning, i.e. learning a probabilistic classifier that discerns between the weights \cite{bickel2009discriminative}. Let $T = \{0,1\}$ denote whether a sample is within the training or test distribution letting $P(\cdot) = \mathbb{P}(\cdot|T = 0)$ and $Q(\cdot) = \mathbb{P}(\cdot|T = 1)$, then the importance weights can be expressed up to proportion as

\begin{equation*}
    w(x) = \frac{dQ(x)}{dP(x)} \propto \frac{\mathbb{P}(T = 1| x)}{\mathbb{P}(T = 0| x)}.
\end{equation*}

Hence, one can estimate a probabilistic classifier to discriminate between $T = 0$ and $T = 1$, $ \mathbb{\widehat{P}}(T = 1| x)$, which can then be used to construct a plug-in estimate 
\begin{equation*}
    \widehat{w}(x) = \frac{\mathbb{\widehat{P}}(T = 1| x)}{\mathbb{\widehat{P}}(T = 0| x)}
\end{equation*}  

Secondly, we consider Penalized Risk Minimization \cite{nguyen2010estimating}. 
This method involves the estimation as follows:  
\begin{align*}
    \widehat w_s(x) &= \widehat g_s(x)  \\
    \widehat g_s &= \argmin_{g \in \mathcal{G}} \frac{1}{n_{s,tr}} \sum_{i \in N_s 1}g(X_i) \\
    &- \frac{1}{n_{s,test}} \sum_{j \in N'_{s} }^n\log(g(X'_j)) + \frac{\gamma_n}{2}\widetilde{I}^2(g)
\end{align*}
where $\mathcal{G}$ is a reproducing Kernel Hilbert space, $N_s$ and $N'_s$ refer to the indices corresponding to training and test indices in segment $s$ and $\widetilde{I}$ is a non-negative measure of complexity for $g$ such that $\widetilde{I}(w) < \infty$ (for example, $\widetilde{I}(g) = \norm{g}_{\calF}$). 

Another common approach to estimating importance weights is via Kernel Mean Matching (KMM) \cite{gretton2012kernel}. Given a positive definite kernel function $k(\cdot, \cdot) \to \mathbb{R}$ let $G_{ij} = k(X_i, X_j)$ and let $\kappa_i = \frac{n_{tr}}{n_{te}} \sum_{j = 1}^{n_{te}}k(X_i, X_j')$. Then kernel mean matching computes a set of weights $\{w_i\}_{i = 1}^{n_{tr}}$
\begin{align*}
    &\text{min}_{\vect{w}} \frac{1}{n_{tr}^2}\vect{w}^\intercal G \vect{w} - \frac{2}{n_{tr}^2} \kappa^\intercal \vect{w} \\
    \text{such that } &w_i \in [0,\eta] \text{ and } \bigg| \sum_{i = 1}^{n_{tr}} w_i - n_{tr} \bigg|\leq n_{tr}\epsilon 
\end{align*}
where a good choice of $\epsilon$ is $\mathcal{O}(\eta/\sqrt{n_{tr}})$. For further details see \cite{gretton2009covariate}. 

Lastly, when correcting for label shift one can use the Black Box Shift Estimation (BBSE) of \cite{lipton2018detecting}. This involves using a predictor $h(\cdot)$, and denote $ \widehat Y = h(X)$. The corresponding importance weights $\vect{\tilde w} \in \mathbb{R}^K$ where $\tilde w(y) = \frac{Q(y)}{P(y)}$, can be defined through a linear equation of the expected confusion matrix under the training distribution $\mathbf{C}$, where $\mathbf{C}_{ij} = P(\widehat Y = i,Y = j )$ and the predicted class frequencies on the test distribution $\vect{\mu}_{Q,h} \in \mathbb{R}^K$ where $\mu_{Q,h, j} = Q(\widehat Y = j)$. We differentiate the importance weights for each class $\vect{\tilde w} \in \mathbb{R}^{K}$ from $\vect{w}  \in \mathbb{R}^n$, the weight vector for each training observation using the tilde annotation. The importance weights can then be defined by the solution to the following: 

$$ \vect{\mu}_{Q,h} = \mathbf{C}\vect{\tilde w} $$

Using the training labels, one can estimate $\vect{\widehat \mu}_{Q,h}$ and $\mathbf{\widehat C}$ using empirical means. As long as $\mathbf{C}, \mathbf{\widehat{C}}$ are invertible, then one can estimate the importance weights of the class

$$ \vect{\widehat{\tilde{w}}} = \mathbf{\widehat{C}}^{-1}\vect{\widehat{\mu}}_{Q,h} $$

\section{Proofs of Theorem~\ref{thm:multiple_robust_rate} and supporting Lemmas}
%%%%%%%%%%%%%%%%%%%%%%%%%%%%%%%%%%%%%%%%%%%%%%%%%%%%%%%%%%%%%%%%%%%%%%%%%%%%%%%
%%%%%%%%%%%%%%%%%%%%%%%%%%%%%%%%%%%%%%%%%%%%%%%%%%%%%%%%%%%%%%%%%%%%%%%%%%%%%%%

In order to prove Theorem~\ref{thm:multiple_robust_rate} we first introduce a series of supporting lemmas, after which we proceed with the proof of the theorem in Appendix~\ref{sec: Main Theorem Proof}.

\subsection{Supporting Lemmas} \label{sec: supporting lemmas}
We first introduce a useful Lemma for proving the main theorem of our paper. 

\begin{lemma}[Best Linear Combination As A Prior] \label{lem:linear_combination}
    Let $\{h_m\}_{m = 1}^M$ be a collection of base models contained in some function space $h_m \in \calF$. Let $s$ denote the target segment of interest, and let $N_s \subset \{1,2,n_{tr}\}$ be the set of indices of the training data corresponding to segment $s$ with $|N_s| = n_s$. Let $\beta \in \mathcal{B} \subset \mathbb{R}^M$  denote the parameters for a linear combination of $h_m$. Let $\mathcal{B}$ be a convex set in the parameter space. Let $\beta^{*(s)}$ denote the optimal linear combination satisfying $\beta^{*(s)} = \argmin_{\beta \in \mathcal{B}} R_P[\vect{h}^\intercal\beta|s]$ and the empirical estimate $\widehat \beta^{(s)} = \argmin_{\beta \in \mathcal{B}} R[\vect{h}^\intercal\beta|\mathbf{Y}, \mathbf{X}, s]$. 
    Denote the following random variables and parameters derived from the loss function and the base predictors:  
    \begin{align*}
        Z_{mi} &= h_m(X_i),  \quad  H = \E[\nabla^2_{\beta} \ell(Y,Z^\intercal\beta^{*(s)})|s] \\
         G &= \E[\nabla_{\beta} \ell(Y,Z^\intercal\beta^{*(s)}) \nabla_{\beta} \ell(Y,Z^\intercal\beta^{*(s)})^\intercal |s] \\
        M_{eff} &= \text{Tr}\left( H^{-1} G\right), \quad M_F = \max_{m} \norm{h_m}_{\calF} \\
        \Sigma &= \E[ZZ^\intercal|s], \quad \text{Where } Y_i, X_i \sim_{iid} P^{(s)}(y,x).
    \end{align*}

    Assume that the conditions of \cref{assumption: regularity conditions} hold. 
    Then for all $n \geq \rho M_{eff} M \log(e M / \delta)$ 
    \begin{equation}
        \norm{\vect{h}^\intercal\beta^{*(s)} - \widehat \beta^{(s) \intercal} \vect{h}}_{\calF} \leq C\frac{M_F}{\tilde{\lambda}_1}\sqrt{\frac{MM_{eff} \log(e/\delta)}{n_s}}
    \end{equation}
    For a universal constant $C$
\end{lemma}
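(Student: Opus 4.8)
The plan is to reduce the claimed RKHS-norm bound to a finite-sample bound on the Euclidean parameter error, and then to invoke the M-estimation theory of \citet{ostrovskii2021finite}. Since $\vect{h}^\intercal\beta^{*(s)}$ and $\widehat\beta^{(s)\intercal}\vect{h}$ are linear combinations of the same base models, their difference equals $(\beta^{*(s)} - \widehat\beta^{(s)})^\intercal\vect{h}$; applying the triangle inequality in $\calF$, the uniform bound $\norm{h_m}_{\calF}\leq M_F$, and Cauchy--Schwarz gives
$$\norm{\vect{h}^\intercal\beta^{*(s)} - \widehat\beta^{(s)\intercal}\vect{h}}_{\calF} \leq M_F\norm{\beta^{*(s)} - \widehat\beta^{(s)}}_1 \leq M_F\sqrt{M}\,\norm{\beta^{*(s)} - \widehat\beta^{(s)}}_2 .$$
It therefore suffices to control $\norm{\beta^{*(s)} - \widehat\beta^{(s)}}_2$, where $\widehat\beta^{(s)}$ is the empirical risk minimizer over the convex set $\mathcal{B}$ and $\beta^{*(s)}$ is its population counterpart.

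The heart of the argument is bounding this parameter error through the finite-sample guarantees of \citet{ostrovskii2021finite}, whose effective-dimension quantity is precisely $M_{eff} = \text{Tr}(H^{-1}G)$. I would verify that our regularity conditions supply their hypotheses: the $2$-self-concordance of $\ell$ (\cref{assumption: self concordance}), the subgaussianity of $Z$ and of the score and curvature-weighted design vectors (\cref{assumption: SubGaussian}), and the nondegenerate population Hessian $\tilde\lambda_1 = \lambda_{\min}(H) > 0$ (\cref{assumption: Alignment of Base Models}). Convexity of $\mathcal{B}$ together with the stationarity conditions lets the constrained minimizer be handled by the same localization machinery. The sample-size hypothesis $n_s \geq \rho\, M_{eff} M\log(eM/\delta)$, with $\rho$ entering through the covariance-domination bound $\Sigma \preceq \rho H$ (\cref{assumption: Covariance Condition}), is exactly what places $\widehat\beta^{(s)}$ inside the self-concordant (Dikin-type) neighborhood of $\beta^{*(s)}$, so that the linearization $\widehat\beta^{(s)} - \beta^{*(s)} = -H^{-1}\nabla R(\beta^{*(s)}) + (\text{higher order})$ holds up to constants. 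There their result yields, with probability at least $1-\delta$, a bound whose scale is governed by $\nabla R(\beta^{*(s)})^\intercal H^{-1}\nabla R(\beta^{*(s)})$, whose expectation is $M_{eff}/n_s$ and whose subgaussian fluctuations contribute the $\log(e/\delta)$ factor.

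It then remains to convert the Hessian-geometry bound into the Euclidean norm required by the first display. Since $H \succeq \tilde\lambda_1 I$ we have $\norm{v}_2 \leq \tilde\lambda_1^{-1/2}\norm{v}_H$ and $\norm{H^{-1}}_{\mathrm{op}} \leq \tilde\lambda_1^{-1}$; applying these to the linearized error gives $\norm{\beta^{*(s)} - \widehat\beta^{(s)}}_2 \leq C\,\tilde\lambda_1^{-1}\sqrt{M_{eff}\log(e/\delta)/n_s}$, which upon substitution into the first display and absorption of $\rho$ and numerical constants into $C$ produces the stated inequality. The main obstacle is the second step: correctly importing the \citet{ostrovskii2021finite} localization bound, that is, checking that Assumptions \ref{assumption: self concordance}--\ref{assumption: Alignment of Base Models} furnish exactly their required self-concordance and subgaussian-design conditions, that the sample-size threshold keeps the estimator in the region where the quadratic (Hessian) approximation of the excess risk is tight, and that constrained minimization over the convex $\mathcal{B}$ does not invalidate the stationarity argument. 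Once the parameter rate $\tilde\lambda_1^{-1}\sqrt{M_{eff}\log(e/\delta)/n_s}$ is established, the reductions in the first and third paragraphs are routine.
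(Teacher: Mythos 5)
Your proposal is correct and follows essentially the same route as the paper's proof: invoke Theorem~1.1 of \citet{ostrovskii2021finite} to obtain $\norm{\widehat\beta^{(s)}-\beta^{*(s)}}_2 \lesssim \tilde{\lambda}_1^{-1}\sqrt{M_{eff}\log(e/\delta)/n_s}$ under the stated sample-size condition, then pass to the $\calF$-norm via the triangle inequality, $M_F = \max_m \norm{h_m}_{\calF}$, and $\norm{\cdot}_1 \le \sqrt{M}\norm{\cdot}_2$. Your additional care in checking that Assumptions~\ref{assumption: self concordance}--\ref{assumption: Alignment of Base Models} match the hypotheses of that theorem is a welcome elaboration of a step the paper leaves implicit, but it does not change the argument.
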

Lemma~\ref{lem:linear_combination} follows from a simple application of the results from \cite{ostrovskii2021finite}. The main idea we exploit is that learning a simple linear combination is easier than learning in the larger function class $\calF$. Since we assume that the total number of base estimators is not increasing, then $M_{eff}$, $M_{F}, M$ and $\tilde{\lambda}_1$ are all constants and we recover the stage 1 model at a rate of $n_s^{-1/2}$. This therefore, allows us to learn this prior faster than that of the stage 2 model.

% Suppose that $\Omega[,]$ satisfies the triangle inequality, then $\nu' \leq \nu_{MR} + \frac{C}{\sqrt{n_s}}$

\subsection{Proof of Lemma~\ref{lem:linear_combination}}
\begin{proof}
    Recall $h_m: \X \to \Y$ are the set of base-models, and let $\vect{h}$ denote the vector of such functions. Denote a linear combination of these with weights $\beta$ such that $\vect{h}^\intercal\beta = \sum_{m = 1}^M h_m\beta_m$. 

The first step is an immediate application of Theorem 1.1 of \cite{ostrovskii2021finite}. This theorem is applied as follows. Let $\rho$ denote the minimum value such that $\Sigma \preceq \rho  H$. Then for all $n_s \geq \rho M_{eff} M \log(e M / \delta)$ 

\begin{equation*}
    R[\vect{h}^\intercal \widehat \beta^{(s)}|s] - R[\vect{h}^\intercal\beta^{*(s)}|s] \lesssim (\widehat \beta^{(s)} - \beta^{*(s)})^\intercal H^2(\widehat \beta^{(s)} - \beta^{*(s)}) \lesssim \frac{M_{eff} \log(e/\delta)}{n_s} \label{eq: Ostrovskii Bach Bound}
\end{equation*}
where $\lesssim $ denotes less than or equal to, up to a universal constant $C$. 

Further let $\tilde{\lambda}_1$ denote the smallest eigenvalue of $ H$. Then with probability at least $1 - \delta$

\begin{equation*}
   \norm{\widehat \beta^{(s)} - \beta^{*(s)}}_2^2 \lesssim \frac{M_{eff} \log(e/\delta)}{\lambda^2_1 n_s}. 
\end{equation*}

Therefore, for any norm associated with a function space $\norm{\cdot}_{\calF}$, 

\begin{align*}
    \norm{\vect{h}^\intercal\beta^{*(s)} - \vect{h}^\intercal \widehat \beta^{(s)}}_{\calF} &\leq \sum_{m = 1}^M \norm{h_m(\beta_m^{*(s)} - \widehat \beta_r)}_{\calF} \\
    &= \sum_{m = 1}^M \norm{h_m(\beta_m^{*(s)} - \widehat \beta_r)}_{\calF} \\
    &\leq \norm{\widehat \beta^{(s)} - \beta^{*(s)}}_1 \sup_{m \in \{1,2,\dots, M\}}\norm{h_m}_{\calF}
\end{align*}

Therefore our proof is complete after applying $\norm{\widehat \beta^{(s)} - \beta^{*(s)}}_1 \leq \sqrt{M} \norm{\widehat \beta^{(s)} - \beta^{*(s)}}_2$. 
\begin{equation*}
    \norm{\vect{h}^\intercal\beta^{*(s)} - \vect{h}^\intercal \widehat \beta}_{\calF} \lesssim \frac{M_F}{\tilde{\lambda}_1}\sqrt{\frac{MM_{eff} \log(e/\delta)}{n_s}}
\end{equation*}
\end{proof}

We next consider the following lemma, first introduced in  \cite{reddi2015doubly} with its few line proof for completeness.
\begin{lemma}[Lemma 2 of \cite{reddi2015doubly}]\label{lem:Rademacher_complexity}
    Suppose $f^*_{q,\lambda_{q,s}}$ denote the population maximizer as in equation~\eqref{eq:population_shift_solution}. The following bound holds with probability at least $1 - \delta$
    \begin{align*}
        \sup_{f: \norm{f - f_0} \leq \nu} \left| R[f|\mathbf{Y}, \mathbf{X},s;w_s] - R_Q[f|s] \right| &\leq \eta L \left(\frac{2\nu}{n_s}\sqrt{\text{tr}(\vect{K}_s)} + 3\sqrt{\frac{\log(2/\delta)}{n_s}} + \frac{1}{n_s} \sqrt{\sum_{i \in N_s} f_0^2(X_i)}\right) 
    \end{align*}
\end{lemma}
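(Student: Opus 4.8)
The plan is to read the left-hand side as a uniform deviation between an empirical mean and its population expectation over the RKHS ball $\{f:\norm{f-f_0}_{\calF}\leq\nu\}$, and to control it with the standard symmetrization-plus-contraction machinery for Rademacher complexities. The first thing I would establish is the unbiasedness identity that makes $w_s$ special: because these are the \emph{true} importance weights and $Q^{(s)}\ll P^{(s)}$ with $w_s=dQ^{(s)}/dP^{(s)}$, for every fixed $f$ we have $\E_{(Y,X)\sim P^{(s)}}[w_s(Y,X)\ell(Y,f(X))]=\E_{(Y,X)\sim Q^{(s)}}[\ell(Y,f(X))]=R_Q[f|s]$. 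Hence $R[f|\mathbf{Y},\mathbf{X},s;w_s]$ is, term by term, an empirical average of i.i.d.\ summands with mean exactly $R_Q[f|s]$, so the quantity inside the supremum is a centered empirical process indexed by the ball.

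Next I would note that each summand $w_s(Y_i,X_i)\ell(Y_i,f(X_i))$ lies in $[0,\eta L]$ by \cref{assumption: bounded weights} and $|\ell|\leq L$ from \cref{assumption: bounded Lipschitz}. Applying the standard Rademacher generalization bound (bounded differences/McDiarmid for the deviation of the supremum from its mean, symmetrization for the mean) to this range-$\eta L$ class yields the concentration term $3\eta L\sqrt{\log(2/\delta)/n_s}$ and reduces the problem, up to the symmetrization factor $2$, to bounding the empirical Rademacher complexity $\widehat{\mathfrak{R}}$ of the induced loss class $\{(y,x)\mapsto w_s(y,x)\ell(y,f(x)):\norm{f-f_0}_{\calF}\leq\nu\}$. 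I would then peel off the weight and the loss with the Ledoux--Talagrand contraction inequality: for each $i$ the map $\varrho\mapsto w_s(Y_i,X_i)\ell(Y_i,\varrho)$ is $\eta L$-Lipschitz (weight at most $\eta$, loss $L$-Lipschitz in its second argument), so $\widehat{\mathfrak{R}}$ of the loss class is at most $\eta L$ times the Rademacher complexity of $\{f:\norm{f-f_0}_{\calF}\leq\nu\}$ evaluated at the $X_i$.

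The last step is bounding the Rademacher complexity of the \emph{shifted} ball. Writing $f=f_0+g$ with $\norm{g}_{\calF}\leq\nu$ and using the triangle inequality, I would split it into a radius contribution and a center contribution. The radius part $\E_\sigma\sup_{\norm{g}_{\calF}\leq\nu}|\tfrac1{n_s}\sum_i\sigma_i g(X_i)|$ equals $\tfrac{\nu}{n_s}\E_\sigma\norm{\sum_i\sigma_i k(X_i,\cdot)}_{\calF}$ by the reproducing property and is at most $\tfrac{\nu}{n_s}\sqrt{\operatorname{tr}(\vect{K}_s)}$ by Jensen and $\E[\sigma_i\sigma_j]=\delta_{ij}$, giving the $\tfrac{2\nu}{n_s}\sqrt{\operatorname{tr}(\vect{K}_s)}$ term. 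The center part $\E_\sigma|\tfrac1{n_s}\sum_i\sigma_i f_0(X_i)|$ is bounded by $\tfrac1{n_s}\sqrt{\sum_{i\in N_s}f_0^2(X_i)}$, again by Jensen, producing the final term. Assembling these pieces and collecting the $\eta L$ factor reproduces the stated bound.

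I expect the main obstacle to be the non-centered ball: the textbook RKHS Rademacher bound is stated for balls centered at the origin and yields only the trace term, so the essential maneuver is to decompose the ball about its center $f_0$ and account separately for the $\sqrt{\sum_i f_0^2(X_i)}$ contribution, while tracking which terms inherit the symmetrization factor $2$ and which do not. A secondary point worth stating is that $f_0$ is treated as fixed here (in the application it is the stage-1 fit $\widehat\beta^{(s)\intercal}\vect{h}$, taken conditionally on the base models and the tuning split), so the indexing ball is deterministic and the symmetrization and McDiarmid arguments apply verbatim.
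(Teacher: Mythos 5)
Your proposal is correct and follows essentially the same route as the paper's proof: the paper invokes Lemma 22 of Bartlett and Mendelson (symmetrization, McDiarmid, and Lipschitz contraction packaged together), absorbs the bounded weights as an extra factor of $\eta$, and evaluates the empirical Rademacher complexity of the ball $\{f:\norm{f-f_0}\leq\nu\}$ exactly as you do, by splitting into the radius term $\tfrac{2\nu}{n_s}\sqrt{\operatorname{tr}(\vect{K}_s)}$ and the center term $\tfrac{1}{n_s}\sqrt{\sum_i f_0^2(X_i)}$. Your version is merely more explicit about the unbiasedness identity $\E_{P^{(s)}}[w_s\ell(Y,f(X))]=R_Q[f|s]$ (which the paper's proof displays with a typo, writing $R_P$ where $R_Q$ is meant) and about treating the ball center $f_0$ as fixed.
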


\begin{proof}    
    This proof simply uses the standard Rademacher complexity bounds for kernel methods. 
    
    Let $\mathcal{\widehat R}_{n_s}(\mathcal{F})$ denote the empirical Rademacher average of a function class $\mathcal{F}$

    $$ \mathcal{\widehat R}_{n_s}(\mathcal{F}) = \mathbb{E}\left[ \sup_{f \in \mathcal{F}} \frac{2}{n_s}\left |\sum_{i \in N_s} \sigma_i f(x_i) \right |  \right]$$
    where $\sigma_i$ are independently drawn uniform $\{\pm 1\}$ random variables.
    
    Using Lemma 22 of \cite{bartlett2002rademacher}, and in combination with the boundedness of the loss function. 
    It can be shown that with probability $1 - \delta$
    \begin{align*}
        \sup_{f \in \mathcal{F}} |R[f|\mathbf{Y}, \mathbf{X}, s] - R_P[f|s]| &\leq  L\mathcal{\widehat R}_n(\mathcal{F}) + 3L\sqrt{\frac{\log(2/\delta)}{2n}}
    \end{align*}
    due to the $L$-Lipschitz and $L$-bounded loss function $\ell$
    Furthermore, in this setting, if the empirical distributions are weighted by some function $w_s(y,x) \leq \eta$. Then 
    \begin{align*}
        \sup_{f \in \mathcal{F}} |R[f|\mathbf{Y}, \mathbf{X}, w_s] - R_P[f]| &\leq L\eta\mathcal{\widehat R}_n(\mathcal{F}) + 3L\eta\sqrt{\frac{\log(2/\delta)}{2n}}
    \end{align*}
    If $\mathcal{F} = \{f: \norm{f - f_0} \leq \nu\}$, then $\mathcal{\widehat R}_n(\mathcal{F}) = \frac{2\nu}{n}\sqrt{\text{tr}(\vect{K})} + \frac{1}{n}\sqrt{\sum_{i = 1}^n f_0^2(X_i)} $ and the proof is complete. 
\end{proof}

Lastly, we include an upper bound on the weights.
\begin{lemma}\label{lem:importance_weight}
    Let $\widehat w_s$ be an estimator of the true importance weights. 
    Then
    \begin{equation*}
        \sup_{f \in \calF}|R[f|\mathbf{Y}, \mathbf{X},s, \widehat w_s] - R[f|\mathbf{Y}, \mathbf{X},s, w_s]| \leq L \frac{\norm{\widehat w_s - w_s}_1}{n_s}
    \end{equation*}
\end{lemma}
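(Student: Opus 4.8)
The plan is to prove this bound by a direct computation, since the statement is essentially an immediate consequence of the boundedness of the loss function together with the linearity of the weighted empirical risk in the weights. First I would expand the difference of the two risks using the definition $R[f|\mathbf{Y}, \mathbf{X},s; \alpha] = \frac{1}{n_s}\sum_{i \in N_s}\alpha(Y_i, X_i) \ell(Y_i, f(X_i))$. Because both risks share the same loss evaluations $\ell(Y_i, f(X_i))$ and differ only in the weight placed on each term, their difference telescopes into a single sum:
\begin{equation*}
    R[f|\mathbf{Y}, \mathbf{X},s, \widehat w_s] - R[f|\mathbf{Y}, \mathbf{X},s, w_s] = \frac{1}{n_s}\sum_{i \in N_s}\bigl(\widehat w_s(X_i,Y_i) - w_s(X_i,Y_i)\bigr)\,\ell(Y_i, f(X_i)).
\end{equation*}

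Next I would apply the triangle inequality to move the absolute value inside the sum and factor each summand into the magnitude of the weight error and the magnitude of the loss. At this point the key ingredient is the boundedness half of \cref{assumption: bounded Lipschitz}, namely $|\ell(y,\varrho)| \leq L$ for all arguments, which lets me replace every $|\ell(Y_i, f(X_i))|$ by the uniform constant $L$. This yields
\begin{equation*}
    \bigl|R[f|\mathbf{Y}, \mathbf{X},s, \widehat w_s] - R[f|\mathbf{Y}, \mathbf{X},s, w_s]\bigr| \leq \frac{L}{n_s}\sum_{i \in N_s}\bigl|\widehat w_s(X_i,Y_i) - w_s(X_i,Y_i)\bigr| = L\,\frac{\norm{\widehat w_s - w_s}_1}{n_s},
\end{equation*}
where the final equality is just the definition of $\norm{\widehat w_s - w_s}_1 = \sum_{i \in N_s} | \widehat w_s(X_i,Y_i) - w_s(X_i,Y_i)|$ introduced in \cref{thm:multiple_robust_rate}.

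Finally, I would observe that the derived bound does not depend on $f$ at all: the only place $f$ entered was through $\ell(Y_i, f(X_i))$, which was uniformly controlled by $L$ irrespective of $f \in \calF$. Consequently the inequality holds uniformly over the function class and in particular survives taking the supremum over $f \in \calF$ on the left-hand side, completing the argument. There is no genuine obstacle here; the only point requiring a modicum of care is ensuring the bound is established before the supremum is taken (rather than attempting to bound a supremum directly), so that the $f$-independence of the right-hand side can be invoked cleanly. The Lipschitz portion of \cref{assumption: bounded Lipschitz} is not needed for this lemma—only the boundedness $|\ell| \leq L$ is used.
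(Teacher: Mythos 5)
Your proposal is correct and follows essentially the same route as the paper's proof: expand the difference of the two weighted empirical risks into a single sum over the weight discrepancies, then bound the loss terms uniformly by $L$ (the paper phrases this one step as H\"older's inequality with the $\ell_\infty$--$\ell_1$ pairing, which is the same computation as your triangle inequality plus $|\ell|\leq L$). Your closing observations that the bound is $f$-independent and that only the boundedness half of \cref{assumption: bounded Lipschitz} is needed are accurate.
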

\begin{proof}
    The proof is a straightforward application of Holder's inequality. 
    \begin{align*}
        \sup_{f \in \calF}|R[f|\mathbf{Y}, \mathbf{X},s, \widehat w_s] - R[f|\mathbf{Y}, \mathbf{X},s, w_s]| &= \sup_{f \in \calF}|\frac{1}{n_s}\sum_{i \in N_s} \ell(Y_i,f(X_i))(\widehat w_s(X_i,Y_i) - w_s(X_i,Y_i))| \\
        &\leq \sup_{f \in \calF}\text{max}_{i \in N_s}|\ell(Y_i,f(X_i))|\frac{\norm{\widehat w_s - w_s}_1}{n_s} \\
        &\leq L\frac{\norm{\widehat w_s - w_s}_1}{n_s}
    \end{align*}
\end{proof}

\subsection{Proof of Theorem~\ref{thm:multiple_robust_rate}} \label{sec: Main Theorem Proof}
In this proof, we will rely on several lemmas, which we include subsequently in \cref{sec: supporting lemmas}. 

\begin{proof}
Recall the two-stage multiply robust estimator
\begin{align*}
    \widehat f_{MR}^{(s)} &= \argmin_{f \in \mathcal{F}} R[f|\mathbf{Y}, \mathbf{X}, s; \widehat w_s ] + \lambda' \norm{\vect{h}^\intercal\widehat \beta^{(s)} -  f}_{\calF} \\
    &= \argmin_{f: \norm{\vect{h}^\intercal\widehat \beta^{(s)} -  f}_{\calF} \leq \nu'} R[f|\mathbf{Y}, \mathbf{X}, s;  \widehat w_s ]
\end{align*}

where the constraint value $\nu'$ denotes the corresponding value to the regularization parameter $\lambda'$. 

We first draw on  \cref{lem:linear_combination} which states with probability at least $1 - \delta$,   $\norm{\vect{h}^\intercal\widehat \beta^{(s)} -  f}_{\calF} \leq  \norm{\beta^{*(s) \intercal}  \vect{h} -  f}_{\calF}  + C\frac{M_F}{\tilde{\lambda}_1}\sqrt{\frac{RM_{eff} \log(e/\delta)}{n_s}} = \nu'$

Next, let $B$ denote the bound from Lemma~\ref{lem:Rademacher_complexity} which holds with probability at least $ 1 - \delta$
\begin{align*}
    \sup_{f: \norm{f - \vect{h}^\intercal\widehat \beta} \leq \nu'} \left| R[f|\mathbf{Y}, \mathbf{X},s;w_s] - R_Q[f|s] \right| &\leq B(n_s, \delta) \\
    B(n_s, \delta) &:= \eta L \left(\frac{2\nu'}{n_s}\sqrt{\text{tr}(\vect{K}_s)} + 3\sqrt{\frac{\log(2/\delta)}{n_s}} + \frac{1}{n_s} \sqrt{\sum_{i \in N_s} (\vect{h}^\intercal\widehat \beta)^2(X_i)}\right)
\end{align*}

Furthermore, let $C(n_s, \widehat \beta)$ denote the bound from Lemma~\ref{lem:importance_weight} denoting the statistical error from estimating the importance weights:
$$ |R_Q[f|\mathbf{Y}, \mathbf{X},s, \widehat w_s] - R_Q[f|\mathbf{Y}, \mathbf{X},s,w_s]| \leq C(n_s, \widehat w_s) := L\frac{\norm{\widehat w_s - w_s}_1}{n_s}$$

Using these two bounds, we can define a simple bound on the generalization error, with probability at least $1 - \delta$. 
\begin{align*}
    R_Q[\widehat f_{MR}^{(s)}|s] &\leq R_Q[\widehat f_{MR}^{(s)}|\mathbf{Y}, \mathbf{X},s, w_s] + B(n_s, \delta) \\
    &\leq R[\widehat f_{MR}^{(s)}|\mathbf{Y}, \mathbf{X},s, \widehat w_s] + B(n_s, \delta) + C(n_s, \widehat w_s)\\
    &\leq R[f^{*(s)}_{\nu_{q,s}}|\mathbf{Y}, \mathbf{X},s, \widehat w_s] + B(n_s, \delta) + C(n_s, \widehat w_s) \\
    &\leq R[f^{*(s)}_{\nu_{q,s}}|\mathbf{Y}, \mathbf{X},s, w_s] + B(n_s, \delta) + 2C(n_s, \widehat w_s) \\
    &\leq R[f^{*(s)}_{\nu_{q,s}}|s] + 2B(n_s, \delta) + 2C(n_s, \widehat w_s) \\
\end{align*}

The third inequality follows from the fact $R[\widehat f_{MR}^{(s)}|\mathbf{Y}, \mathbf{X},s, \widehat w_s] \leq R[f^{*(s)}_{\nu_{q,s}}|\mathbf{Y}, \mathbf{X},s, \widehat w_s] $

Therefore, with probability at least $1 - 2\delta$
\begin{equation}
    R_Q[\widehat f_{MR}^{(s)}|s] \leq R[f^{*(s)}_{\nu_{q,s}}|s] + 2\eta L \left(\frac{2\nu'}{n_s}\sqrt{\text{tr}(\vect{K}_s)} + 3\sqrt{\frac{\log(2/\delta)}{n_s}} + \frac{1}{n_s} \sqrt{\sum_{i \in N_s} f_0^2(X_i)}\right) + 2L\frac{\norm{\widehat w_s - w_s}_1}{n_s}
\end{equation}

Where 
\begin{equation*}
    \nu' = \norm{\vect{h}^\intercal\beta^{*(s)} - f^{*(s)}_{\nu_{q,s}}} + C\frac{M_F}{\tilde{\lambda}_1}\sqrt{\frac{M M_{eff} \log(e/\delta)}{n_s}}
\end{equation*}
for a universal constant $C$. 

In order for our method to be effective  $\nu_{MR} = \norm{\vect{h}^\intercal\beta^{*(s)} -  f^{*(s)}_{\nu_{q,s}}}_{{\cal F}}  \ll \norm{f^{*(s)}_{\nu_{q,s}}}_{{\cal F}} = \nu_{q,s}$, however this is not necessary for developing the bound in the first place. In other words, the best linear combination of the base models trained on the unweighted segment, should be close to the optimizer of the weighted set in the function class $\calF$. The idea here is that if we have a candidate set of prior functions, then finding the best linear combination is generally going to be a much easier statistical task than learning a function class regularized towards $0$. 
\end{proof}

\subsection{Bounds on estimating the importance weights} \label{appendix: importance weights estimation bounds}

Our upper bound in Theorem~\ref{thm:multiple_robust_rate} relies on estimating the importance weights $\norm{\widehat w_s - w_s}_1$. We focus on two upper bounds, one for covariate shift and one for label shift. 

\begin{lemma}[Lemma 8 of \cite{reddi2015doubly}]
    Suppose that $\widehat w_s$ is an estimator for $w_s$ obtained by Penalized Risk Minimization (as in \cite{nguyen2010estimating}). 
    Suppose $\gamma_n = cn_s^{-2/(2 + \tau)}$ for some $\tau > 0$. Let $\eta : w_s \leq \eta$ denote the bound of the magnitude of the true importance weights. Then with probability at least $1 - \delta$ 
    \begin{equation}
        \frac{1}{n_s} \norm{\widehat w_s - w_s}_1 \leq \sqrt{\gamma_n \eta} + \eta \sqrt[\leftroot{-2}\uproot{2}4]{\frac{8}{n_s}\log(2/\delta)}
    \end{equation}
\end{lemma}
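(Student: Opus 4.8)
The plan is to treat this as the per-segment specialization of the convergence analysis for the penalized-risk-minimization (PRM) density-ratio estimator of \cite{nguyen2010estimating} (as packaged in \cite{reddi2015doubly}), and then to convert the resulting excess-risk rate into the stated empirical $L_1$ bound. The starting point is to identify the population functional being minimized. Writing $J_0(g) = \E_{P^{(s)}}[g(X)] - \E_{Q^{(s)}}[\log g(X)]$ for the unpenalized part of the PRM objective, a first-order functional calculation shows that $J_0$ is convex and is minimized over positive $g$ exactly at $g = w_s = dQ^{(s)}/dP^{(s)}$, since the stationarity condition reads $p(x) - q(x)/g(x) = 0$. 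Consequently the excess objective $J_0(g) - J_0(w_s)$ equals the logarithmic Bregman divergence
$$ D(g \,\|\, w_s) = \E_{P^{(s)}}\!\left[\, g - w_s - w_s \log(g/w_s)\,\right] \ge 0, $$
which serves as the intermediate quantity linking optimization error to the $L_1$ weight error.

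Next I would bound $D(\widehat w_s \,\|\, w_s)$ using the optimality of $\widehat w_s$ for the regularized \emph{empirical} objective. The standard basic-inequality argument gives $D(\widehat w_s \,\|\, w_s) \le [\text{regularization bias}] + [\text{empirical-process fluctuation}]$, where the bias term is controlled by $\tfrac{\gamma_n}{2}\widetilde I^2(w_s) = \mathcal{O}(\gamma_n)$ (the penalty evaluated at the truth, which is finite by assumption) and the fluctuation term is the deviation of the empirical objective from its population counterpart. Since $g$ and $\log g$ are bounded on the constrained RKHS ball (using $w_s \le \eta$ from \cref{assumption: bounded weights} and the corresponding constraint on $\mathcal{G}$), a bounded-differences concentration yields a fluctuation of order $\eta\sqrt{\log(2/\delta)/n_s}$ with probability at least $1-\delta$; the choice $\gamma_n = c\,n_s^{-2/(2+\tau)}$ is exactly what balances the regularization against the localized complexity of $\mathcal{G}$ (entropy exponent $\tau$) so that the complexity term is absorbed. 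The net conclusion of this step is $D(\widehat w_s \,\|\, w_s) \lesssim \gamma_n + \eta\sqrt{8\log(2/\delta)/n_s}$.

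The final step converts the divergence into the empirical $L_1$ error. Using the Taylor remainder $u - 1 - \log u = \tfrac12 \xi^{-2}(u-1)^2$ for $\xi$ between $1$ and $u = \widehat w_s/w_s$, together with boundedness of the ratio on the constraint set, one obtains a quadratic lower bound $D(\widehat w_s\,\|\,w_s) \gtrsim \eta^{-1}\,\E_{P^{(s)}}[(\widehat w_s - w_s)^2]$; Jensen/Cauchy--Schwarz then gives $\E_{P^{(s)}}[|\widehat w_s - w_s|] \le \sqrt{\E_{P^{(s)}}[(\widehat w_s - w_s)^2]} \lesssim \sqrt{\eta\, D(\widehat w_s\,\|\,w_s)}$, and a final concentration step replaces the population average by the empirical average $\tfrac{1}{n_s}\norm{\widehat w_s - w_s}_1$. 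Substituting the divergence bound yields $\tfrac1{n_s}\norm{\widehat w_s - w_s}_1 \lesssim \sqrt{\eta(\gamma_n + \eta\sqrt{\log(2/\delta)/n_s})} \le \sqrt{\gamma_n\eta} + \eta(8\log(2/\delta)/n_s)^{1/4}$, which is the claimed form: the outer square root produces the $\sqrt{\gamma_n\eta}$ term, and composing it with the $n_s^{-1/2}$ statistical rate produces the $n_s^{-1/4}$ fourth-root term.

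I expect the divergence-to-$L_1$ conversion to be the main obstacle, since it requires the ratio $\widehat w_s/w_s$ to be controlled \emph{both} above and away from zero so that the Taylor remainder $\xi^{-2}$ stays bounded and $\log \widehat w_s$ is well defined; this is precisely where a lower bound on the true weights (or a two-sided boundedness constraint built into $\mathcal{G}$) is needed, beyond the one-sided bound in \cref{assumption: bounded weights}. A secondary difficulty is making the fluctuation term uniform over the data-dependent $\widehat w_s$ rather than for a fixed $g$, which requires a localized Rademacher or peeling argument whose calibration is governed by the metric entropy of $\mathcal{G}$ and fixes the choice $\gamma_n = c\,n_s^{-2/(2+\tau)}$.
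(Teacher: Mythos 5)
The paper itself does not prove this lemma: it is imported verbatim as Lemma~8 of \cite{reddi2015doubly}, whose proof in turn rests on the convergence analysis of the PRM estimator in \cite{nguyen2010estimating}, so there is no in-paper argument to compare against line by line. Your reconstruction follows the same overall architecture as the cited source --- population objective minimized at the true ratio $w_s$, basic inequality plus localized empirical-process control calibrated by the entropy exponent $\tau$ (which is indeed where $\gamma_n = c n_s^{-2/(2+\tau)}$ comes from), then a conversion from excess risk to $L_1$ error --- and the closing algebra $\sqrt{a+b}\le\sqrt a+\sqrt b$ correctly reproduces the two terms of the stated bound.

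The step where your route genuinely diverges, and where it is weaker than necessary, is the divergence-to-$L_1$ conversion. You propose the Taylor remainder $u-1-\log u=\tfrac12\xi^{-2}(u-1)^2$ and correctly flag that this needs $\widehat w_s/w_s$ bounded both above and away from zero, a condition not supplied by \cref{assumption: regularity conditions}. The standard argument avoids this entirely: the elementary inequality $u-1-\log u\ge(\sqrt u-1)^2$, valid for all $u>0$, gives $D(g\,\|\,w_s)\ge\E_{P^{(s)}}[(\sqrt g-\sqrt{w_s})^2]$ with no lower bound on anything, and then $|g-w_s|=|\sqrt g-\sqrt{w_s}|\,(\sqrt g+\sqrt{w_s})\le 2\sqrt\eta\,|\sqrt g-\sqrt{w_s}|$ uses only the one-sided bound; Cauchy--Schwarz then yields $\E_{P^{(s)}}|g-w_s|\le 2\sqrt{\eta\,D(g\,\|\,w_s)}$. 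This Hellinger-type reduction is precisely how \cite{nguyen2010estimating} state their rates, so the obstacle you single out as the main one dissolves under the standard argument. The genuinely hard part, which your sketch (reasonably) black-boxes, is the localization/peeling argument that controls the empirical fluctuation uniformly over the data-dependent $\widehat w_s$ and fixes the exponent in $\gamma_n$; that is the technical core of the cited proof. As a self-contained derivation your proposal is therefore incomplete at that step and slightly off-track at the conversion step, but as a reconstruction of the cited argument it is substantially correct in structure and conclusion.
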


We note that in general, learning the importance weights under covariate shift non-parametrically is a statistically challenging task. An alternative approach may be to propose some parametric model for $w_s$, which can help to reduce the final complexity.

A second approach we consider is the black box shift estimation (BBSE) for label shift, where in this case the outcome $y \in \{1,2,\dots, K\}$ \cite{lipton2018detecting}. It can be shown that the importance weights can be estimated at a $n^{-1/2}$ using the output of a classifier $\widehat y = h(x)$. For simplicity, assume that there are an equal number of train and test samples $n_{train} = n_{test} = n$. 

\begin{theorem}[Theorem 3 from \cite{lipton2018detecting}]
Let $\mathbf{C}$ denote the expected confusion matrix computed under the training distribution $\mathbf{C}_{i, j} = P(\widehat Y = i, Y = j)$ where $\mathbf{C}$ is invertible where $\sigma_{min}$ it's smallest eigenvalue. Then there is a universal constant $C$ such that for all $n > 80 \log(n)\sigma_{min}^{-2} $. Then with probability at least $1 - \frac{5 K }{n^{10}}$. 
\begin{align*}
    \sum_{y \in \{1,2,\dots, K\}} (\widehat{\tilde{w}}_s(y) - \tilde{w}_s(y) )^2 &\leq \frac{C}{\sigma^2_{min}}\left(\frac{(\sum_{y} (\tilde w_s(y) )^2 + K)\log(n)}{n} \right)
\end{align*}
\end{theorem}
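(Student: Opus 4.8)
The plan is to prove the concentration bound for the BBSE weight estimator $\vect{\widehat{\tilde{w}}}_s = \mathbf{\widehat{C}}^{-1}\vect{\widehat{\mu}}_{Q,h}$ by controlling the two sources of estimation error: the sampling error in the estimated confusion matrix $\mathbf{\widehat C}$ and the error in the estimated test-prediction frequencies $\vect{\widehat{\mu}}_{Q,h}$, both relative to their population counterparts satisfying $\vect{\mu}_{Q,h} = \mathbf{C}\vect{\tilde w}_s$. The strategy is a standard matrix-perturbation argument: write $\vect{\widehat{\tilde{w}}}_s - \vect{\tilde w}_s = \mathbf{\widehat C}^{-1}(\vect{\widehat{\mu}}_{Q,h} - \vect{\mu}_{Q,h}) + (\mathbf{\widehat C}^{-1} - \mathbf{C}^{-1})\vect{\mu}_{Q,h}$, then bound each piece in terms of $\|\mathbf{\widehat C} - \mathbf{C}\|$, $\|\vect{\widehat{\mu}}_{Q,h} - \vect{\mu}_{Q,h}\|$, and the smallest singular value $\sigma_{min}$ of $\mathbf{C}$.

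First I would establish the elementary concentration facts. Each entry $\mathbf{\widehat C}_{ij}$ is an empirical mean of i.i.d.\ Bernoulli indicators $I(\widehat Y = i, Y = j)$ over the $n$ training samples, and each entry $\widehat \mu_{Q,h,j}$ is an empirical mean of $I(\widehat Y = j)$ over the $n$ test samples. Applying Hoeffding's (or Bernstein's) inequality entrywise and a union bound over the $K^2 + K$ entries gives, with the stated failure probability $5K/n^{10}$, deviations of order $\sqrt{\log(n)/n}$ per entry, hence $\|\mathbf{\widehat C} - \mathbf{C}\|_F$ and $\|\vect{\widehat{\mu}}_{Q,h} - \vect{\mu}_{Q,h}\|_2$ bounded by a constant times $\sqrt{K\log(n)/n}$; the polynomial tail $n^{-10}$ dictates choosing the concentration radius proportional to $\sqrt{\log(n)/n}$. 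The next step is the matrix inverse perturbation: under the sample-size condition $n > 80\log(n)\sigma_{min}^{-2}$, the deviation $\|\mathbf{\widehat C} - \mathbf{C}\|$ is small enough that $\mathbf{\widehat C}$ remains invertible with $\sigma_{min}(\mathbf{\widehat C}) \geq \sigma_{min}/2$, and the identity $\mathbf{\widehat C}^{-1} - \mathbf{C}^{-1} = -\mathbf{\widehat C}^{-1}(\mathbf{\widehat C} - \mathbf{C})\mathbf{C}^{-1}$ yields $\|\mathbf{\widehat C}^{-1} - \mathbf{C}^{-1}\| \lesssim \sigma_{min}^{-2}\|\mathbf{\widehat C} - \mathbf{C}\|$.

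Combining these in the decomposition, the first term contributes $\sigma_{min}^{-1}\|\vect{\widehat{\mu}}_{Q,h} - \vect{\mu}_{Q,h}\|$ and the second contributes $\sigma_{min}^{-2}\|\mathbf{\widehat C} - \mathbf{C}\|\,\|\vect{\mu}_{Q,h}\|$, where $\|\vect{\mu}_{Q,h}\|^2 = \|\mathbf{C}\vect{\tilde w}_s\|^2 \leq \|\vect{\tilde w}_s\|^2$ relates the magnitude of the weights $\sum_y \tilde w_s(y)^2$ to the final bound. Squaring and collecting terms, the $K$ appears from the dimension-dependence in $\|\vect{\widehat{\mu}}_{Q,h} - \vect{\mu}_{Q,h}\|^2 \lesssim K\log(n)/n$ and the weight-magnitude $\sum_y \tilde w_s(y)^2$ from the inverse-perturbation term, producing the claimed $\frac{C}{\sigma_{min}^2}\cdot\frac{(\sum_y \tilde w_s(y)^2 + K)\log(n)}{n}$. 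The main obstacle I anticipate is making the union bound and the entrywise tail constants interact cleanly with the $\sigma_{min}^{-2}$-type factors so that the single sample-size threshold $n > 80\log(n)\sigma_{min}^{-2}$ simultaneously guarantees invertibility of $\mathbf{\widehat C}$, the lower bound $\sigma_{min}(\mathbf{\widehat C}) \geq \sigma_{min}/2$, and the final $n^{-10}$-scale probability, rather than carrying several separate conditions; since this is Theorem 3 of \citet{lipton2018detecting}, I would defer the fine bookkeeping to that reference and present the perturbation skeleton above as the proof sketch.
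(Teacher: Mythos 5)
The paper does not actually prove this statement: it is imported verbatim from \citet{lipton2018detecting} (their Theorem~3), and the only original content in this part of the appendix is the subsequent Cauchy--Schwarz step converting the $\ell_2$ bound on class weights into an $\ell_1$ bound on per-sample weights. So your sketch should be measured against the original proof, whose skeleton (concentration for $\mathbf{\widehat C}$ and $\vect{\widehat\mu}_{Q,h}$ plus a perturbation argument under the threshold $n > 80\log(n)\sigma_{min}^{-2}$) you have correctly identified. However, two of your concrete steps are lossy enough that, as written, they do not recover the stated bound. First, in the decomposition $\vect{\widehat{\tilde w}}_s - \vect{\tilde w}_s = \mathbf{\widehat C}^{-1}(\vect{\widehat\mu}_{Q,h} - \vect{\mu}_{Q,h}) + (\mathbf{\widehat C}^{-1} - \mathbf{C}^{-1})\vect{\mu}_{Q,h}$, you bound the second piece by the generic operator estimate $\norm{\mathbf{\widehat C}^{-1} - \mathbf{C}^{-1}} \lesssim \sigma_{min}^{-2}\norm{\mathbf{\widehat C} - \mathbf{C}}$ times $\norm{\vect{\mu}_{Q,h}} \leq \norm{\vect{\tilde w}_s}$; after squaring this yields a term of order $\sigma_{min}^{-4}\norm{\vect{\tilde w}_s}^2\log(n)/n$, which is worse by a factor $\sigma_{min}^{-2}$ than the claimed $\sigma_{min}^{-2}\norm{\vect{\tilde w}_s}^2\log(n)/n$ and cannot in general be repaired by the crude bounds $\norm{\vect{\mu}_{Q,h}}\le 1$ or $\norm{\vect{\mu}_{Q,h}}\le\norm{\vect{\tilde w}_s}$. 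The missing idea is to exploit the exact identity $\mathbf{C}^{-1}\vect{\mu}_{Q,h} = \vect{\tilde w}_s$: writing $(\mathbf{\widehat C}^{-1} - \mathbf{C}^{-1})\vect{\mu}_{Q,h} = -\mathbf{\widehat C}^{-1}(\mathbf{\widehat C} - \mathbf{C})\vect{\tilde w}_s$, or equivalently starting from $\mathbf{\widehat C}(\vect{\widehat{\tilde w}}_s - \vect{\tilde w}_s) = (\vect{\widehat\mu}_{Q,h} - \vect{\mu}_{Q,h}) - (\mathbf{\widehat C} - \mathbf{C})\vect{\tilde w}_s$, so that only a single factor $\norm{\mathbf{\widehat C}^{-1}} \leq 2/\sigma_{min}$ ever appears and the overall $\sigma_{min}^{-2}$ dependence of the squared bound is preserved.

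Second, your entrywise concentration for $\mathbf{\widehat C} - \mathbf{C}$ is too weak to produce the $(\sum_y \tilde w_s(y)^2 + K)$ split. Hoeffding plus a union bound over the $K^2$ entries gives $\norm{\mathbf{\widehat C} - \mathbf{C}}_F \lesssim K\sqrt{\log(n)/n}$, and even the Bernstein refinement you allude to (using $\sum_{ij}\mathbf{C}_{ij} = 1$) only gives $\sqrt{K\log(n)/n}$; fed into $\norm{(\mathbf{\widehat C} - \mathbf{C})\vect{\tilde w}_s} \leq \norm{\mathbf{\widehat C} - \mathbf{C}}\,\norm{\vect{\tilde w}_s}$, this inflates the $\norm{\vect{\tilde w}_s}^2$ term by a factor of $K$ (or $K^2$) relative to the theorem, and likewise makes the single threshold $n > 80\log(n)\sigma_{min}^{-2}$ insufficient for $\sigma_{min}(\mathbf{\widehat C}) \geq \sigma_{min}/2$. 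What the stated constants require is an operator-norm bound $\norm{\mathbf{\widehat C} - \mathbf{C}}_{op} \lesssim \sqrt{\log(n)/n}$ with no polynomial $K$-dependence, obtained from a matrix concentration inequality (matrix Bernstein applied to the i.i.d.\ summands $e_{\widehat Y_i}e_{Y_i}^\intercal - \mathbf{C}$, whose norms and variance proxy are bounded by absolute constants); the factor $K$ should then enter only through $\norm{\vect{\widehat\mu}_{Q,h} - \vect{\mu}_{Q,h}}_2^2 \lesssim K\log(n)/n$. With these two repairs --- the one-inverse decomposition and the operator-norm matrix concentration --- your argument coincides with the proof in \citet{lipton2018detecting}.
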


It follows from a simple application of the Cauchy-Schwarz inequality that we can upper bound $\norm{\widehat w_s - w_s}_1/n$

\begin{align*}
    \norm{\widehat w_s - w_s}_1/n &= \frac{1}{n}\sum_{i \in N} |\widehat{\tilde{w}}_s(y_i) - \tilde{w}_s(y_i)| \\
    &= \sum_{y \in \{1,2,\dots, K\}} \frac{n_k}{n} |\widehat{\tilde{w}}_s(y) - \tilde{w}_s(y)| \\
    &\leq \sqrt{\sum_{y \in \{1,2,\dots, K\}} (\frac{n_k}{n})^2 \sum_{y \in \{1,2,\dots, K\}} (\widehat{\tilde{w}}_s(y) - \tilde{w}_s(y))^2} \\
    &\leq \sqrt{\sum_{y \in \{1,2,\dots, K\}} (\widehat{\tilde{w}}_s(y) - \tilde{w}_s(y) )^2} \\
    &\leq \frac{\sqrt{C}}{\sigma_{min}\left(\sum_{y}\tilde w^2(y) + K\right)}\sqrt{\frac{\log(n)}{n}}
\end{align*}
where $n_k$ is the number of observations in $N$ with outcome $y = k$. Therefore, we can effectively learn the weights. 

\section{Additional Empirical Details} \label{appendix: empirical_results}

\subsection{Data splitting and cross validation}\label{sec: cross_validation}
The primary reason for data splitting is to avoid overfitting that can occur when the same data is used for training the base models ${h_m}_{m = 1}^M$ and subsequently fitting the stage 1 estimator. Alternatively, one may split the training data and preserve a holdout set (e.g. 20\%) which will be used for estimating $\beta$, then use use the fine-tuning dataset to estimate $\widehat f^{(s)}$. We provide a visualization in \cref{fig:mr_cross_validation}. This splitting procedure can be simply generalized to a K-fold versions, as well as grouped versions for data types that are clustered by observational unit (for example, the multiple candidate cities for each userid in our location dataset). Cross-validation details for the numerical experiments are included in \cref{appendix: real data experiment details} and \cref{appendix: tuning_parameters}. 

Given a training and test dataset, we constructed a cross-validation procedure as follows: 
\begin{enumerate}
    \item Split the training data into $K$ folds.
    \item For each $k$ in ${1,2, … K}$:
    \begin{enumerate}
        \item Use the $k^{th}$ fold as the validation fold, and combine the remaining folds to form the train fold.
        \item Fit the MR estimator using algorithm 1 on the train fold, splitting it in a base train and a tune fold in the process.
        \item Compute the importance weights on the validation fold to match the test data.
        \item For each segment, evaluate the fine-tuned MR estimator on the weighted validation fold.
    \end{enumerate}
\end{enumerate}

\begin{figure}[ht]
\vskip -0.1in
\begin{center}
\centerline{\includegraphics[width=0.6\columnwidth]{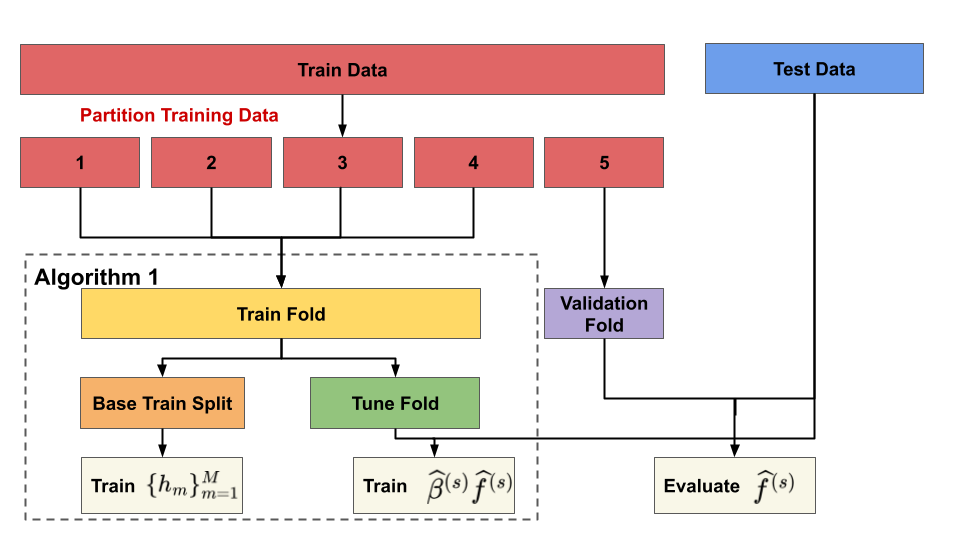}}
\caption{ Data splitting for the multiply robust estimator. We use a holdout set for refining the linear combination, and train the second stage model on the whole training fold segment.}
\label{fig:mr_cross_validation}
\end{center}
\vskip -0.4in
\end{figure}

\subsection{Real data experiment details and constructed distribution shifts. } \label{appendix: real data experiment details}

We include a brief set of summary statistics for each of the datasets in our application along with descriptions of the constructed shifts in \cref{tab: dataset -summary}. We include the \texttt{Apartment} price and \texttt{Online News} popularity datasets for regression,  \texttt{Banking} marketing campaign and \texttt{Adult} census datasets for binary classification, and the \texttt{customer} segmentation dataset \cite{kaggle:2021} for multi-classification.

\begin{table}[htb]
\caption{Summary statistics for UCI datasets}
\label{tab: dataset -summary}
\vskip 0.15in
\begin{center}
\begin{small}
\begin{sc}
\begin{tabular}{lcr}
\toprule
Dataset     & Sample Size              & Number of Covariates \\ \midrule
Adult       & 32561                    & 15                   \\
Banking     & 45211                    & 16                   \\
Customer    & 10695                    & 8                   \\ 
Apartment   & 87257                    & 6                    \\
Online News & 39644                    & 51                   \\ \midrule
Dataset     & Distribution Shift       & Segments                 \\ \midrule
Adult       & Lab. Shift               & Work Class (7)           \\
Banking     & Lab. Shift               & Occupation (12)          \\
Customer    & Lab. Shift               & Anonymous Group (8)      \\ 
Apartment   & Cov. Shift               & States (30)              \\
Online News & Cov. Shift               & Day of Week (7)          \\ \midrule
Dataset     & Task                     & Outcome (Positivity Rate)           \\ \midrule
Adult       & Class.             & Income $\geq$\$50K (0.24)           \\
Banking     & Class.             & Subscription (0.116)                \\
Customer    & MultiClass.        & \begin{tabular}{r} Consumer Group [A,B,C,D] \\ (0.244, 0.230, 0.244, 0.281) \end{tabular} \\
Apartment   & Reg.               & Price (\$)                          \\
Online News & Reg.               & log(Shares)                         \\ \bottomrule
\end{tabular}
\end{sc}
\end{small}
\end{center}
\vskip -0.1in
\end{table}

In the regression datasets, we construct a covariate shift by picking the feature most correlated with the outcome, and creating a propensity for sampling into the train and test set respectively. In the Apartment price dataset, listings over $2000$ square feet are selected into the test set with probability $0.8$ with the remaining having probability $0.2$. Similarly, if the average number of shares for articles by articles with the same key words is greater than $6000$, then we select these into the test set with probability $0.8$, and otherwise $0.2$ respectively. In the case of label shift, we first construct an $80\%$, $20\%$ split into a train and test set, then followup by sub-sampling the positive cases in the test set by $1/2$, thereby creating a change in the rate of positive classes in the test set. For the multiclass dataset, we resample the test fold according to the rates $(0.4,0.1,0.1,0.4)$ for each of the $4$ classes respectively.

For the experiments on the location platform dataset, we refer to the set of parameters in \cref{appendix: empirical_results}, otherwise we refer to the default parameters of the models in \cref{appendix: empirical_results}. All cross-validation procedures are $K = 5$ fold. 

When correcting for covariate shift, we consider learning a Logistic Regression classifier, then use the predicted odds ratio as the importance weights. We this approach as it is simple to scale to the dataset sizes we consider as opposed to KMM or PRM which scale quadratically with training sample size in their complexity. When correcting for label shift, we use an \XGB classifier trained on the unweighted training set as a base model and apply the BBSE method for label shift \cite{lipton2018detecting}. 

\subsection{Computing Environment and Tuning Parameters} \label{appendix: tuning_parameters}
\subsubsection{Computing Environment}
Both \MFSAN and \DADIL are implemented on NVIDIA Tesla P100-SXM2 GPUs with 16 GB of memory. All other methods are run on Intel Core Processor (Broadwell) Model 61 with 36 cores.
\subsubsection{Tuning Parameters}
\textbf{\XGB Tuning Parameters} \\
When cross validation is not used to pick the tuning parameters we use the default parameters for \XGB. The default for the refinement step stacked \XGB are: \texttt{ max\_depth: 2, n\_estimators: 25}.

Tuning parameters for the clustered segments are by default set to \texttt{colsample\_bytree: 1.0, learning\_rate: 0.1, max\_depth: 3, n\_estimators: 200,  subsample: 0.8} while the base model is trained with the default parameters. 

\textbf{Cross Validation Tuning Parameters} \\
All experiments involving \XGB for base models were tuned with the following \texttt{n\_estimators:} [10, 25, 50, 200, 300, 500], \texttt{max\_depth}: [2, 3,  5, 7], \texttt{subsample}: [0.8, 1.0], \texttt{colsample\_bytree}: [0.8, 1.0]. The \texttt{learning\_rate} was kept at 0.1. For second stage estimators (refinement), we grid search over \texttt{learning\_rate}: [0.001,0.01,0.1], \texttt{n\_estimators:} [0, 10, 25, 50], \texttt{max\_depth}: [0, 1, 3], \texttt{subsample}: [0.8, 1.0], \texttt{colsample\_bytree}: [0.8, 1.0]. All methods were tuned using $5$ fold cross-validation. 

\textbf{Tuning parameters for \MFSAN}
The parameters of \MFSAN were selected via grid search, while the search space are defined as \texttt{batch\_size}: [32, 64, 128, 256], \texttt{learning\_rate}: [0.0001, 0.0005, 0.001, 0.005], \texttt{adaptation\_factor}: [0.1, 0.2, 0.3, 0.4, 0.5], \texttt{n\_iterations}: [200, 300, 400, 500]. We uses \textit{adam} optimizer throughout the experiments, and adapted the size of the sub-networks within \MFSAN by different sizes of datasets to avoid overfitting. 

\textbf{Tuning parameters for \DADIL} 
The parameters of \DADIL were selected via grid search, while the search space are defined as \texttt{batch\_size}: [64, 128, 256], \texttt{learning\_rate}: [0.001, 0.01, 0.1, 0.5],  \texttt{n\_components}: [6, 7, 8, 9, 10], \texttt{n\_iterations}: [100, 150, 200]. We uses \textit{adam} optimizer throughout the experiments. 

\subsection{Learned Clusters by Segment} 

After computing a distance matrix using \cref{alg:segment_clustering} we apply Ward's linkage for hierarchical agglomerative clustering and select the threshold that maximizes the number of base models $M$ before isolating a single segment.

\begin{table}[ht!]
\caption{Segment Clusters of the \texttt{Adult} Dataset}
\begin{center}
\begin{small}
\begin{sc}
\begin{tabular}{lr}
\toprule
\textbf{Cluster}                     & \textbf{Work Class}   \\
\midrule
0                                    & {[}`Other' `Private' `Self-emp-inc'{]}        \\
1                                    & {[}`Federal-gov' `Local-gov' `Never-worked' `Self-emp-not-inc'{]} \\
\bottomrule
\end{tabular}
\end{sc}
\end{small}
\end{center}
\end{table}

\begin{table}[ht!]
\caption{Segment Clusters of the \texttt{Banking} Dataset}
\begin{center}
\begin{small}
\begin{sc}
\begin{tabular}{lr}
\toprule
\textbf{Cluster}                     & \textbf{Job Category}   \\
\midrule
0                                    & {[}`admin.' `blue-collar' `services' `student'{]}        \\
1                                    & {[}`entrepreneur' `management' `self-employed' `technician' `unemployed'{]} \\
2                                    & {[}`housemaid' `retired' `unknown'{]}                    \\
\bottomrule
\end{tabular}
\end{sc}
\end{small}
\end{center}
\end{table}

\begin{table}[ht!]
\caption{Segment Clusters of the \texttt{Apartment} Dataset}
\begin{center}
\begin{small}
\begin{sc}
\begin{tabular}{lr}
\toprule
\textbf{Cluster}                     & \textbf{Country}   \\
\midrule
0                                    & {[}`AZ' `FL' `GA' `NC' `NV' `TN' `TX' `VA'{]}        \\
1                                    & {[}'CA' `MA' `NJ'{]} \\
2                                    & {[}`CO' `IL' `MD' `OH' `PA' `WA' {]}                    \\
3                                    & {[} 'LA' `MO' `NE' {]}                    \\
\bottomrule
\end{tabular}
\end{sc}
\end{small}
\end{center}
\end{table}

\begin{table}[ht!]
\caption{Segment Clusters of the \texttt{Online News} Dataset}
\begin{center}
\begin{small}
\begin{sc}
\begin{tabular}{lr}
\toprule
\textbf{Cluster}                     & \textbf{Day of the Week}   \\
\midrule
0                                    & {[}`Saturday' `Sunday'{]}        \\
1                                    & {[}`Monday' `Tuesday' `Wednesday' `Thursday' `Friday'{]} \\
\bottomrule
\end{tabular}
\end{sc}
\end{small}
\end{center}
\end{table}

\begin{table}[ht!]
\caption{Segment Clusters of the \texttt{Customer} Dataset}
\begin{center}
\begin{small}
\begin{sc}
\begin{tabular}{lr}
\toprule
\textbf{Cluster}                     & \textbf{Anonymized Category}   \\
\midrule
0                                    & {[}`Cat\_1' `Cat\_3' `Cat\_5' `Cat\_6' `Cat\_7' `Unknown'{]}        \\
1                                    & {[}`Cat\_2' `Cat\_4'{]} \\
\bottomrule
\end{tabular}
\end{sc}
\end{small}
\end{center}
\end{table}

\begin{table}[ht!]
\caption{Base Model Clustering of Location Platform Dataset}
\begin{center}
\begin{small}
\begin{sc}
\begin{tabular}{lr}
\toprule
\textbf{Cluster}                     & \textbf{Countries}   \\
\midrule
0                                    & {[}`Country 1' `Country 7' `Country 12' `Country 18' `other'{]}        \\
1                                    & {[}`Country 2' `Country 3' `Country 6' `Country 13' `Country 16' `Country 17' `Country 20'{]} \\
2                                    & {[}`Country 4' `Country 9' `Country 14'{]}                     \\
3                                    & {[}`Country 5' `Country 8' `Country 10' `Country 11'{]}                \\
4                                    & {[}`Country 15' `Country 19'{]}                         \\
\bottomrule
\end{tabular}
\end{sc}
\end{small}
\end{center}
\end{table}

\newpage 
\subsection{Model Performance Across Segments} \label{appendix: Model performance across segments}

We include the performance of each of the models across the segments considered for each of the real datasets. We bold the top-performing model on each of the segments. We also calculated the area under the ROC curve, area under the precision-recall curve, and the F1 score for classification problems, however, their results closely align with production or unweighted models and are thus omitted for brevity. For classification tasks, Brier scores gave similar results to CE and are also omitted to avoid repetition. 

% \textbf{Regression UCI Datasets} \\

\begin{table}[ht!]
\caption{Apartment Prices. Partition by State}
\begin{center}
\begin{small}
\begin{sc}
\begin{tabular}{lcccccc}
\toprule
Segment & \XGB & \DR & \DRSF & \MR & \MFSAN & \DADIL \\
\midrule
AZ & 1.0 (0.062) & 1.004 (0.063) & \textbf{0.378 (0.032)} & 0.392 (0.03) & 3.404 (0.173) & 3.801 (0.167) \\
CA & 1.0 (0.029) & 0.996 (0.03) & 0.388 (0.017) & \textbf{0.386 (0.018)} & 0.669 (0.031) & 0.708 (0.032) \\
CO & 1.0 (0.061) & 0.993 (0.063) & \textbf{0.784 (0.061)} & 0.832 (0.062) & 2.105 (0.09) & 4.509 (0.216) \\
FL & 1.0 (0.06) & 1.029 (0.063) & 0.981 (0.068) & \textbf{0.939 (0.06)} & 2.311 (0.102) & 3.84 (0.169) \\
GA & 1.0 (0.044) & 0.994 (0.045) & 0.847 (0.038) & \textbf{0.673 (0.032)} & 2.293 (0.094) & 5.077 (0.22) \\
IL & 1.0 (0.077) & 0.998 (0.078) & 1.055 (0.089) & \textbf{0.828 (0.078)} & 2.18 (0.188) & 1.631 (0.138) \\
LA & 1.0 (0.056) & 0.992 (0.059) & 0.766 (0.05) & \textbf{0.294 (0.038)} & 2.921 (0.176) & 4.053 (0.204) \\
MA & 1.0 (0.036) & 0.997 (0.037) & 1.187 (0.041) & \textbf{0.417 (0.02)} & 0.64 (0.03) & 0.727 (0.045) \\
MD & 1.0 (0.059) & 1.013 (0.06) & 1.29 (0.071) & \textbf{0.696 (0.053)} & 1.78 (0.07) & 3.969 (0.191) \\
MO & 1.0 (0.078) & 1.003 (0.078) & 0.792 (0.063) & \textbf{0.379 (0.031)} & 3.59 (0.205) & 2.735 (0.127) \\
NC & 1.0 (0.037) & 1.003 (0.038) & 0.749 (0.031) & \textbf{0.417 (0.023)} & 2.722 (0.085) & 6.82 (0.171) \\
NE & 1.0 (0.051) & 1.008 (0.055) & 0.738 (0.044) & \textbf{0.135 (0.014)} & 2.871 (0.158) & 3.421 (0.149) \\
NJ & 1.0 (0.043) & 0.995 (0.043) & 1.208 (0.048) & \textbf{0.506 (0.027)} & 0.948 (0.041) & 1.054 (0.063) \\
NV & 1.0 (0.051) & 0.977 (0.051) & 0.709 (0.041) & \textbf{0.413 (0.028)} & 3.778 (0.15) & 8.172 (0.346) \\
OH & 1.0 (0.048) & 0.991 (0.052) & 0.771 (0.051) & \textbf{0.422 (0.024)} & 3.267 (0.098) & 4.19 (0.105) \\
PA & 1.0 (0.098) & 1.018 (0.103) & 0.93 (0.091) & \textbf{0.794 (0.106)} & 2.742 (0.209) & 2.433 (0.191) \\
TN & 1.0 (0.102) & 1.01 (0.104) & 0.838 (0.101) & \textbf{0.596 (0.096)} & 2.519 (0.175) & 3.751 (0.222) \\
TX & 1.0 (0.03) & 1.009 (0.031) & 0.753 (0.023) & \textbf{0.612 (0.022)} & 3.81 (0.084) & 5.975 (0.136) \\
VA & 1.0 (0.037) & 0.993 (0.038) & 1.012 (0.038) & \textbf{0.965 (0.036)} & 2.233 (0.076) & 3.466 (0.129) \\
WA & 1.0 (0.055) & 1.004 (0.062) & 1.443 (0.08) & \textbf{0.44 (0.039)} & 1.052 (0.076) & 1.057 (0.122) \\
Overall MSE & 1.0 (0.013) & 1.0 (0.013) & 0.774 (0.01) & \textbf{0.524 (0.008)} & 1.873 (0.02) & 2.895 (0.03) \\
\bottomrule
\end{tabular}
\end{sc}
\end{small}
\end{center}
\end{table}

\begin{table}[ht!]
\caption{Online News Popularity Dataset. Partition by day of the week. }
\begin{center}
\begin{small}
\begin{sc}
\begin{tabular}{lcccccc}
\toprule
Segment & \XGB & \DR & \DRSF & \MR & \MFSAN & \DADIL \\
\midrule
Monday & 1.0 (0.064) & 1.02 (0.064) & 0.987 (0.063) & \textbf{0.965 (0.063)} & 1.164 (0.064) & 1.143 (0.064) \\
Tuesday & 1.0 (0.056) & 1.011 (0.056) & 0.984 (0.056) & \textbf{0.943 (0.057)} & 1.134 (0.054) & 1.129 (0.051) \\
Wednesday & 1.0 (0.05) & 1.008 (0.05) & 0.99 (0.053) & \textbf{0.955 (0.05)} & 1.134 (0.052) & 1.175 (0.054) \\
Thursday & 1.0 (0.054) & 1.016 (0.054) & 1.013 (0.056) & \textbf{0.94 (0.052)} & 1.12 (0.047) & 1.116 (0.047) \\
Friday & 1.0 (0.053) & 0.994 (0.051) & 0.993 (0.053) & \textbf{0.984 (0.057)} & 1.189 (0.064) & 1.161 (0.06) \\
Saturday & 1.0 (0.085) & 1.009 (0.085) & 0.994 (0.085) & \textbf{0.926 (0.077)} & 1.035 (0.092) & 1.007 (0.087) \\
Sunday & 1.0 (0.078) & 1.022 (0.079) & 1.011 (0.08) & \textbf{0.873 (0.067)} & 0.974 (0.084) & 0.905 (0.066) \\
Overall MSE & 1.0 (0.023) & 1.011 (0.023) & 0.994 (0.023) & \textbf{0.949 (0.023)} & 1.129 (0.023) & 1.121 (0.023) \\
\bottomrule
\end{tabular}
\end{sc}
\end{small}
\end{center}
\end{table}

% \textbf{Classification UCI Datasets}

\begin{table}[ht!]
\caption{Adult Census dataset. Partition by Work Class}
\begin{center}
\begin{small}
\begin{sc}
\begin{tabular}{lcccccc}
\toprule
Segment & \XGB & \DR & \DRSF & \MR & \MFSAN & \DADIL \\
\midrule
Private & 1.0 (0.033) & 0.955 (0.033) & 0.968 (0.034) & \textbf{0.917 (0.036)} & 1.229 (0.035) & 3.582 (0.049) \\
Federal-gov & \textbf{1.0 (0.156)} & 1.003 (0.164) & 1.01 (0.133) & 1.001 (0.142) & 1.087 (0.146) & 2.181 (0.168) \\
State-gov & 1.0 (0.149) & 0.941 (0.149) & 0.921 (0.148) & \textbf{0.845 (0.139)} & 1.366 (0.158) & 3.027 (0.163) \\
Local-gov & 1.0 (0.095) & 0.93 (0.095) & 0.909 (0.098) & \textbf{0.859 (0.098)} & 1.237 (0.098) & 3.013 (0.116) \\
Self-emp-inc & 1.0 (0.098) & \textbf{0.952 (0.096)} & 1.017 (0.101) & 0.966 (0.105) & 1.158 (0.102) & 2.319 (0.179) \\
Self-emp-not-inc & 1.0 (0.074) & 0.943 (0.074) & 0.831 (0.071) & \textbf{0.796 (0.068)} & 1.253 (0.08) & 2.729 (0.091) \\
Other & 1.0 (0.135) & 0.949 (0.142) & 0.92 (0.136) & \textbf{0.909 (0.157)} & 1.289 (0.137) & 3.172 (0.13) \\
Overall CE & 1.0 (0.027) & 0.953 (0.027) & 0.948 (0.027) & \textbf{0.901 (0.028)} & 1.232 (0.028) & 3.285 (0.038) \\
\bottomrule
\end{tabular}
\end{sc}
\end{small}
\end{center}
\end{table}

\begin{table}[ht!]
\caption{Banking Promotion dataset. Partition by Occupation Category}
\begin{center}
\begin{small}
\begin{sc}
\begin{tabular}{lcccccc}
\toprule
Segment & \XGB & \DR & \DRSF & \MR & \MFSAN & \DADIL \\
\midrule
Admin. & 1.0 (0.1) & \textbf{0.981 (0.101)} & 1.02 (0.098) & 1.018 (0.102) & 1.096 (0.113) & 1.46 (0.113) \\
Blue-collar & 1.0 (0.083) & 0.969 (0.084) & 0.963 (0.081) & \textbf{0.875 (0.083)} & 1.075 (0.093) & 1.558 (0.11) \\
Entrepreneur & 1.0 (0.209) & 0.979 (0.208) & 0.979 (0.199) & \textbf{0.933 (0.2)} & 0.945 (0.175) & 1.266 (0.23) \\
Housemaid & 1.0 (0.159) & 0.955 (0.157) & 0.825 (0.147) & \textbf{0.806 (0.167)} & 1.085 (0.197) & 1.436 (0.229) \\
Management & 1.0 (0.06) & 0.977 (0.06) & 0.972 (0.058) & \textbf{0.935 (0.059)} & 0.984 (0.061) & 1.256 (0.085) \\
Retired & 1.0 (0.084) & 0.979 (0.084) & 0.923 (0.078) & \textbf{0.856 (0.074)} & 0.989 (0.091) & 1.241 (0.096) \\
Self-employed & 1.0 (0.138) & 0.978 (0.139) & 0.938 (0.141) & \textbf{0.924 (0.14)} & 0.956 (0.125) & 1.214 (0.193) \\
Services & 1.0 (0.111) & \textbf{0.967 (0.112)} & 1.028 (0.114) & 0.969 (0.126) & 1.152 (0.129) & 1.669 (0.154) \\
Student & 1.0 (0.16) & 0.98 (0.16) & 1.043 (0.157) & 0.912 (0.142) & \textbf{0.846 (0.123)} & 1.215 (0.147) \\
Technician & 1.0 (0.072) & 0.976 (0.072) & 0.957 (0.071) & \textbf{0.927 (0.075)} & 1.0 (0.079) & 1.251 (0.085) \\
Unemployed & 1.0 (0.153) & 0.992 (0.158) & 1.013 (0.168) & 1.001 (0.179) & \textbf{0.895 (0.138)} & 1.34 (0.199) \\
Unknown & 1.0 (0.297) & 0.966 (0.302) & 1.06 (0.332) & 0.987 (0.3) & \textbf{0.93 (0.281)} & 1.994 (0.417) \\
Overall CE & 1.0 (0.03) & 0.976 (0.03) & 0.971 (0.03) & \textbf{0.925 (0.03)} & 1.01 (0.032) & 1.347 (0.038) \\
\bottomrule
\end{tabular}
\end{sc}
\end{small}
\end{center}
\end{table}

\begin{table}[ht!]
\caption{Customer Segmentation Dataset. Partition by anonymized feature}
\begin{center}
\begin{small}
\begin{sc}
\begin{tabular}{lcccccc}
\toprule
Segment & \XGB & \DR & \DRSF & \MR & \MFSAN & \DADIL \\
\midrule
Cat\_1 & 1.0 (0.081) & 0.984 (0.08) & 1.001 (0.081) & 0.78 (0.061) & \textbf{0.736 (0.055)} & 1.876 (0.147) \\
Cat\_2 & 1.0 (0.041) & 0.997 (0.041) & 1.053 (0.043) & \textbf{0.771 (0.03)} & 0.793 (0.03) & 1.668 (0.069) \\
Cat\_3 & 1.0 (0.029) & 0.991 (0.03) & 1.021 (0.03) & \textbf{0.822 (0.023)} & 0.917 (0.025) & 1.594 (0.048) \\
Cat\_4 & 1.0 (0.025) & 0.975 (0.025) & 0.948 (0.024) & \textbf{0.807 (0.02)} & 0.958 (0.023) & 1.581 (0.041) \\
Cat\_5 & 1.0 (0.082) & 0.964 (0.079) & 0.997 (0.081) & 0.924 (0.074) & \textbf{0.854 (0.067)} & 1.743 (0.146) \\
Cat\_6 & 1.0 (0.012) & 0.973 (0.012) & 1.039 (0.012) & 0.846 (0.01) & \textbf{0.83 (0.009)} & 1.703 (0.021) \\
Cat\_7 & 1.0 (0.061) & 0.997 (0.062) & 1.002 (0.061) & 0.972 (0.055) & \textbf{0.932 (0.052)} & 1.673 (0.104) \\
Unknown & 1.0 (0.082) & 1.0 (0.083) & 1.032 (0.085) & \textbf{0.673 (0.053)} & 0.764 (0.059) & 1.817 (0.15) \\
Overall CE & 1.0 (0.009) & 0.978 (0.009) & 1.025 (0.01) & \textbf{0.834 (0.008)} & 0.853 (0.008) & 1.679 (0.016) \\
\bottomrule
\end{tabular}
\end{sc}
\end{small}
\end{center}
\end{table}

% Overall CE	admin.	        blue-collar	                entrepreneur	housemaid	    management	            retired	        self-employed	    services	    student	        technician	unemployed	    unknown
% 1.01 (0.032)	1.096 (0.113)	1.075 (0.093)	            0.945 (0.175)	1.085 (0.197)	0.984 (0.061)	        0.989 (0.091)	0.956 (0.125)	    1.152 (0.129)	0.846 (0.123)	1.0 (0.079)	0.895 (0.138)	0.93 (0.281)
% \newpage
% \textbf{Location Platform}

% We include the performance on the CE on each country segment in \cref{tab: locations-test-metrics-by-country}. In nearly every segment, we find that the multiply robust method tends to have the best cross entropy. 
\begin{landscape}
\begin{table}[ht!]
\caption{User City Dataset. Relative Cross Entropy to the Production model presented for each of the country segment. We include \MRLC for our method which only includes the first stage estimators without the second stage fine-tuning.}
\label{tab: locations-test-metrics-by-country}
\begin{center}
\begin{small}
\begin{sc}
\begin{tabular}{lccccccccr}
\toprule
Country    & \XGB    & \DR & SF                     & \DRSF   & \MR     & \MRLC      & \MFSAN & \DADIL \\ \midrule
Country 1  & 0.944 (0.222)          & 0.898 (0.217)      & 0.961 (0.24)           & \textbf{0.887 (0.218)} & 1.362 (0.475)          & 1.550 (0.512) & 1.139 (0.253)         & 1.270 (0.283)         \\
Country 2  & 0.921 (0.138)          & 0.876 (0.129)      & 0.938 (0.134)          & 0.890 (0.128)          & \textbf{0.847 (0.142)} & 1.041 (0.153) & 1.008 (0.142)         & 1.635 (0.203)         \\
Country 3  & 0.957 (0.035)          & 0.911 (0.032)      & 0.855 (0.037)          & 0.835 (0.035)          & \textbf{0.763 (0.035)} & 0.842 (0.037) & 1.246 (0.038)         & 2.168 (0.056)         \\
Country 4  & 0.951 (0.128)          & 0.912 (0.122)      & 0.940 (0.131)          & 0.891 (0.121)          & \textbf{0.826 (0.108)} & 0.876 (0.111) & 1.101 (0.123)         & 1.835 (0.167)         \\
Country 5  & 0.950 (0.258)          & 0.953 (0.259)      & 0.919 (0.259)          & \textbf{0.913 (0.246)} & 0.958 (0.271)          & 1.120 (0.222) & 0.876 (0.242)         & 1.727 (0.424)         \\
Country 6  & 0.962 (0.091)          & 0.910 (0.084)      & 0.989 (0.092)          & 0.937 (0.085)          & \textbf{0.864 (0.097)} & 0.991 (0.099) & 0.995 (0.086)         & 2.135 (0.177)         \\
Country 7  & 0.923 (0.068)          & 0.920 (0.068)      & 0.956 (0.072)          & 0.908 (0.069)          & \textbf{0.832 (0.073)} & 0.943 (0.075) & 1.067 (0.074)         & 1.486 (0.104)         \\
Country 8  & 0.975 (0.085)          & 0.924 (0.085)      & 0.990 (0.085)          & 0.931 (0.083)          & \textbf{0.911 (0.105)} & 1.089 (0.107) & 1.042 (0.088)         & 2.353 (0.192)         \\
Country 9  & 0.988 (0.109)          & 0.951 (0.102)      & 0.949 (0.101)          & 0.920 (0.098)          & \textbf{0.908 (0.103)} & 1.013 (0.103) & 1.062 (0.104)         & 2.035 (0.164)         \\
Country 10 & 0.996 (0.056)          & 0.946 (0.053)      & 1.010 (0.056)          & 0.966 (0.052)          & \textbf{0.916 (0.058)} & 1.017 (0.056) & 1.234 (0.072)         & 1.524 (0.097)         \\
Country 11 & 0.995 (0.035)          & 0.954 (0.034)      & 0.995 (0.033)          & 0.966 (0.032)          & \textbf{0.914 (0.034)} & 1.042 (0.032) & 1.148 (0.040)         & 1.908 (0.072)         \\
Country 12 & 0.965 (0.072)          & 0.932 (0.071)      & 0.967 (0.072)          & 0.942 (0.07)           & \textbf{0.865 (0.075)} & 0.937 (0.067) & 1.051 (0.076)         & 1.972 (0.133)         \\
Country 13 & 0.995 (0.052)          & 0.963 (0.050)      & 0.763 (0.060)          & 0.757 (0.061)          & \textbf{0.674 (0.065)} & 0.751 (0.065) & 1.211 (0.059)         & 3.324 (0.133)         \\
Country 14 & 1.004 (0.147)          & 0.994 (0.134)      & \textbf{0.831 (0.153)} & 0.882 (0.139)          & 0.978 (0.268)          & 1.036 (0.273) & 2.016 (0.215)         & 2.243 (0.196)         \\
Country 15 & 1.051 (0.472)          & 1.099 (0.465)      & 1.177 (0.485)          & 1.021 (0.416)          & \textbf{0.86 (0.322)}  & 0.956 (0.269) & 1.705 (0.551)         & 2.938 (1.209)         \\
Country 16 & 1.048 (0.168)          & 0.963 (0.150)      & 1.016 (0.179)          & 0.868 (0.145)          & \textbf{0.807 (0.147)} & 0.920 (0.141) & 1.071 (0.145)         & 1.448 (0.191)         \\
Country 17 & \textbf{0.900 (0.192)} & 0.937 (0.207)      & 0.948 (0.206)          & 0.962 (0.204)          & 0.953 (0.237)          & 1.013 (0.251) & 1.116 (0.215)         & 2.007 (0.412)         \\
Country 18 & 1.024 (0.193)          & 0.984 (0.177)      & 0.981 (0.173)          & 0.984 (0.177)          & \textbf{0.956 (0.21)}  & 1.311 (0.284) & 1.129 (0.204)         & 1.531 (0.227)         \\
Country 19 & 0.960 (0.054)          & 0.915 (0.051)      & 0.917 (0.052)          & 0.882 (0.05)           & \textbf{0.797 (0.053)} & 0.874 (0.05)  & 1.277 (0.062)         & 1.384 (0.071)         \\
Country 20 & 0.970 (0.039)          & 0.932 (0.038)      & 0.977 (0.037)          & 0.934 (0.036)          & \textbf{0.864 (0.035)} & 0.983 (0.035) & 1.153 (0.042)         & 1.845 (0.063)         \\
Other      & 0.975 (0.020)          & 0.926 (0.019)      & 0.990 (0.019)          & 0.934 (0.018)          & \textbf{0.859 (0.019)} & 0.972 (0.018) & 1.683 (0.027)         & 1.781 (0.033)         \\ \bottomrule
\end{tabular}
\end{sc}
\end{small}
\end{center}
\end{table}
\end{landscape}

\end{document}